\documentclass{article} 

\usepackage{iclr2027_conference,times}

\usepackage{url}

\usepackage{amsmath}
\usepackage{amssymb}
\usepackage{amsthm}
\usepackage{booktabs}
\usepackage{multirow}
\usepackage[table]{xcolor}
\usepackage{graphicx}
\usepackage{subcaption}

\usepackage{listings}
\usepackage{inconsolata}   

\usepackage{fancyhdr}
\fancypagestyle{firstpage}{%
  \fancyhf{}%
  \renewcommand{\headrulewidth}{0pt}%
  \renewcommand{\footrulewidth}{0pt}%
  \lfoot{\footnotesize Preprint.}%
  \cfoot{\thepage}%
}

\usepackage[most]{tcolorbox}

\definecolor{mydarkblue}{HTML}{4472C4}
\definecolor{mydarkgreen}{HTML}{228B22}
\definecolor{mycitecolor}{HTML}{4472C4}
\usepackage[colorlinks,
  linkcolor=mydarkblue,
  citecolor=mydarkblue,
  urlcolor=mydarkblue]{hyperref}

\usepackage{amsthm}

\newtheorem{proposition}{Proposition}

\usepackage{wrapfig}

\newcommand{\DKL}{D_{\mathrm{KL}}}
\newcommand{\E}{\mathbb{E}}

\title{Tail-Aware Top-$k$ On-Policy Distillation}

\author{Huipeng Huang\textsuperscript{1},\enspace
  Hongxin Wei\textsuperscript{1}\thanks{Corresponding author (\texttt{weihx@sustech.edu.cn})}\\
  \textsuperscript{1}Department of Statistics and Data Science, Southern University of Science and Technology 
}

\iclrfinalcopy 

\begin{document}

\maketitle
\thispagestyle{firstpage}

\begin{abstract}
On-policy distillation (OPD) has emerged as an effective paradigm for transferring knowledge between language models, where a student is trained to align its next-token distribution with the teacher's along its own trajectories.
To provide dense supervision at tractable cost, many works minimize the reverse Kullback-Leibler (KL) divergence between the student and teacher's normalized distributions over the teacher's top-$k$ tokens.
However, this normalized objective discards the information about tail probability: the total probability outside the teacher's top-$k$ tokens.
As a result, the optimization can steadily increase the student's tail probability and entropy, empirically degrading downstream accuracy.
To address this issue, we propose Tail-Aware Top-$k$ OPD (\textbf{TA-OPD}), a novel distillation method that restores the missing tail probability signal.
In particular, TA-OPD minimizes the reverse KL divergence over the top-$k$ tokens plus a tail token that carries the tail probability.
In effect, TA-OPD better aligns the student's next-token distribution with the teacher's, preventing the increase in tail probability and entropy caused by top-$k$ normalization.
Extensive experiments demonstrate the superiority of TA-OPD, improving Avg@8 by up to 8.05 points on common benchmarks.
Our code is available at \href{https://github.com/HuipengHuang/TA-OPD}{https://github.com/HuipengHuang/TA-OPD}.
\end{abstract}

\section{Introduction}
Knowledge distillation~\citep{hinton2015distilling} is a promising approach for transferring the capabilities of large language models (LLMs) to smaller models.
However, standard distillation is performed off-policy: the student is trained on teacher-generated sequences~\citep{kim2016sequence, gu2024minillm}, yet at inference time it must condition on its own generations.
This discrepancy between training and inference yields errors that accumulate quickly along the generated sequence~\citep{bengio2015scheduled, arora2022exposure} and hinders effective learning~\citep{xu2025speculative}.
This motivates on-policy distillation (OPD)~\citep{agarwal2024policy, lu2025onpolicydistillation, xu2025kdrl}, where the student is trained to align its next-token distribution with the teacher's along its own trajectories.

A popular idea, normalized top-$k$ OPD~\citep{fu2026revisiting, xing2026trust}, minimizes the reverse Kullback-Leibler (KL) divergence between the student and teacher's normalized distributions over the teacher’s top-$k$ tokens.
However, it discards the information about tail probability: the total probability mass outside the teacher's top-$k$ tokens.
Theoretically, the optimization increases the student's tail probability whenever the student's distribution is not well aligned to the teacher's, which commonly happens in distillation.
Empirically, as the student's tail probability steadily increases, it more frequently samples tokens outside the teacher's top-$k$ tokens, drifting toward prefixes where the teacher's supervision is unreliable.
Consequently, the student fails to imitate the teacher's next-token distribution, degrading downstream performance.


To address this, we propose Tail-Aware Top-$k$ OPD (\textbf{TA-OPD}), a novel distillation method that restores the missing tail probability signal.
In particular, TA-OPD minimizes the reverse KL divergence over the top-$k$ tokens plus a tail token that carries the tail probability.
In effect, TA-OPD aligns the student's tail probability with the teacher's, preventing the tail probability and entropy increase caused by top-$k$ normalization.
Theoretically, we show that TA-OPD's objective is a tight lower bound of the full-vocabulary reverse KL divergence.
In addition, when the sampled token's probability is available, we derive a sampled variant of TA-OPD that debiases the TA-OPD objective with the sampled token and yields an unbiased estimate of the full-vocabulary reverse KL divergence.


Extensive experiments on mathematical benchmarks demonstrate the superiority of TA-OPD over baselines.
Notably, when distilling OpenThinker3-7B~\citep{guha2025openthoughtsdatarecipesreasoning} into Qwen2.5-7B-Instruct~\citep{qwen2025qwen25technicalreport}, normalized top-$k$ OPD's training collapses: its training entropy increases to around 6, and its Avg@8 accuracy on MATH500~\citep{hendrycks2021math} drops to 68.78\%.
In contrast, TA-OPD stabilizes distillation, keeps the training entropy below 1.5, and improves Avg@8 accuracy on MATH500 to 77.88\%.
Through additional analyses, we show that TA-OPD is most effective when the student-teacher capability gap is large, and TA-OPD can be applied with a small $k$.

We summarize our contributions as follows:

\begin{itemize}

\item We show that normalized top-$k$ OPD discards the tail probability.
As a result, minimizing the normalized objective can steadily increase the student's tail probability and entropy, empirically degrading downstream accuracy.

\item We propose TA-OPD, a novel distillation method that restores the tail probability signal.
We show that TA-OPD addresses the increase in tail probability and entropy caused by top-$k$ normalization.
When the sampled token's probability is available, we derive a sampled variant of TA-OPD, an unbiased estimate of the full-vocabulary reverse KL divergence.

\item We conduct extensive experiments to show that TA-OPD achieves superior performance
compared with normalized top-$k$ OPD, improving Avg@8 accuracy by up to 8.05 points on common benchmarks.

\end{itemize}

\section{Preliminaries}
\label{sec:preliminaries}

On-policy distillation (OPD) aims to distill knowledge from a teacher LLM $\pi_{\text{te}}$ to a student LLM $\pi_\theta$ on trajectories sampled from the student.
Given a prompt $x\sim\mathcal{D}_x$, the student samples a response $\hat{y} = (\hat{y}_1, \ldots, \hat{y}_T) \sim \pi_\theta(\cdot \mid x)$.
At each step $t$, on the prefix $\hat{y}_{<t} = (\hat{y}_1, \ldots, \hat{y}_{t-1})$, we define the student's and teacher's next-token distributions over a vocabulary $\mathcal{V}$ as  $p_t = \pi_\theta(\cdot \mid x, \hat{y}_{<t})$ and $q_t = \pi_{\text{te}}(\cdot \mid x, \hat{y}_{<t})$.
OPD minimizes reverse KL divergence over student-sampled trajectories:
\begin{equation}
  \mathcal{L}_{\mathrm{OPD}}(\theta)
  = \mathbb{E}_{x \sim \mathcal{D}_x,\; \hat{y} \sim \pi_\theta(\cdot \mid x)}
    \left[ \sum_{t=1}^{T} D_{\mathrm{KL}}(p_t \| q_t) \right],
  \label{eq:opd_exact_token_decomp}
\end{equation}
where $D_{\mathrm{KL}}(p_t \| q_t) = \sum_{v \in \mathcal{V}} p_t(v) \log \tfrac{p_t(v)}{q_t(v)}$.
Different OPD methods vary in how they approximate $D_{\mathrm{KL}}(p_t \| q_t)$ with their per-token losses $\ell_t$.
Given a student-sampled token $\hat{y}_t \sim p_t$, sampled-token OPD~\citep{lu2025onpolicydistillation, xiao2026mimo} computes
$\ell_t^{\mathrm{sample}} = \log p_t(\hat{y}_t) - \log q_t(\hat{y}_t)$,
which is cheap to compute but discards the dense information over the remaining vocabulary.
Full-vocabulary OPD~\citep{xu2026deepseek} computes $\ell_t^{\mathrm{full}} =D_{\mathrm{KL}}(p_t \| q_t)$ exactly over $\mathcal{V}$.
It provides dense supervision but is prohibitively expensive for LLMs.

Normalized top-$k$ OPD provides an intermediate design between sampled-token and
full-vocabulary OPD by restricting the divergence computation to a subset.
Prior works adopt two choices of the subset: the student's top-$k$ tokens~\citep{li2026rethinking} or the teacher's top-$k$ tokens~\citep{jin2026entropy, fu2026revisiting}.
In this work, we primarily focus on the teacher top-$k$ variant.
Formally, we define the teacher's top-$k$ tokens as $S_t^k= \operatorname{TopK}(q_t, k)$.
The normalized distributions on $S_t^k$ are given by:
\[
\bar{p}_t^{(S_t^k)}(v) = \frac{p_t(v)\,\mathbf{1}[v \in S_t^k]}{\sum_{u \in S_t^k} p_t(u)},
\qquad
\bar{q}_t^{(S_t^k)}(v) = \frac{q_t(v)\,\mathbf{1}[v \in S_t^k]}{\sum_{u \in S_t^k} q_t(u)},
\]
where $\mathbf{1}[\cdot]$ denotes the indicator function.
Distillation is then performed by minimizing the subset reverse KL divergence $\ell_t^{\mathrm{norm}} = D_{\mathrm{KL}}\!\bigl(\bar{p}_t^{(S_t^k)} \,\|\, \bar{q}_t^{(S_t^k)}\bigr)$, yielding the trajectory-level objective:
\begin{equation}
\mathcal{L}_{\mathrm{OPD}}^{\mathrm{norm}}(\theta) = \mathbb{E}_{x \sim \mathcal{D}_x,\; \hat{y} \sim \pi_\theta(\cdot \mid x)} \left[ \sum_{t=1}^{T} D_{\mathrm{KL}}\!\bigl(\bar{p}_t^{(S_t^k)} \,\|\, \bar{q}_t^{(S_t^k)}\bigr) \right]. \label{eq:opd_topk_obj_prelim} 
\end{equation}




Normalized top-$k$ OPD reduces the teacher-query cost while retaining dense, multi-token supervision over $S_t^k$, making it a practical approximation to full-vocabulary OPD.
However, the objective discards the tail probability, i.e., the total probability mass outside the teacher's top-$k$ tokens:
\begin{equation}
p_t^{\mathrm{tail}} = \sum_{v \notin S_t^k} p_t(v),
\qquad
q_t^{\mathrm{tail}} = \sum_{v \notin S_t^k} q_t(v).
\end{equation}
Concerningly, the objective can be minimized even when the student's tail probability is much higher than the teacher's, making the student's next-token distribution diverge substantially from the teacher's.
We proceed by analyzing how the method affects the student's tail probability.

\section{Motivation}
In this section, we investigate how the normalized objective in Eq.~\eqref{eq:opd_topk_obj_prelim} changes the student's tail probability.
We find that minimizing the objective can steadily increase the student's tail probability.

\subsection{Theoretical Analysis on the Tail Probability}
\label{sec:motivation_theory}

Fix a step $t$ and let $z_{t,v}$ denote the
student logit for token $v$ under the prefix, so that
$p_t(v) = \frac{e^{z_{t,v}}}{\sum_{u\in\mathcal{V}} e^{z_{t,u}}}$.
We have the following propositions for the optimization of the normalized objective.

\begin{proposition}
\label{prop:topk_gradient}
For every step $t$, the gradient of $\ell_t^{\mathrm{norm}}$ with respect to
the student logit $z_{t,v}$ is
\[
\frac{\partial \ell_t}{\partial z_{t,v}} =
\begin{cases}
\bar{p}_t^{(S_t^k)}(v)\!\left[\log\dfrac{\bar{p}_t^{(S_t^k)}(v)}{\bar{q}_t^{(S_t^k)}(v)} - \ell_t\right],
& v \in S_t^k,\\[1.0em]
0, & v \notin S_t^k.
\end{cases}
\]
Moreover, the gradients on the logits of the top-$k$ tokens sum to zero:
$\sum_{v\in S_t^k}\frac{\partial \ell_t}{\partial z_{t,v}} = 0$.
\end{proposition}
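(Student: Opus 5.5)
The plan is to notice that $\ell_t^{\mathrm{norm}}$ depends on the student logits only through the normalized distribution $\bar{p}_t^{(S_t^k)}$. That distribution is itself a softmax over the restricted logits $\{z_{t,u}\}_{u\in S_t^k}$: the full partition function $\sum_{u\in\mathcal{V}} e^{z_{t,u}}$ cancels when we renormalize over $S_t^k$. So
\[
\bar{p}_t^{(S_t^k)}(v) = \frac{e^{z_{t,v}}}{\sum_{u\in S_t^k} e^{z_{t,u}}}, \qquad v\in S_t^k .
\]
The teacher's $\bar{q}_t^{(S_t^k)}$ and the set $S_t^k = \operatorname{TopK}(q_t,k)$ do not depend on $\theta$, so we treat both as constants. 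The case $v\notin S_t^k$ is then immediate: $z_{t,v}$ does not appear in any $\bar{p}_t^{(S_t^k)}(u)$ with $u\in S_t^k$, hence $\partial \ell_t/\partial z_{t,v}=0$.

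For $v\in S_t^k$, abbreviate $\bar p=\bar{p}_t^{(S_t^k)}$ and $\bar q=\bar{q}_t^{(S_t^k)}$. We use the standard softmax Jacobian restricted to $S_t^k$:
\[
\partial \bar p(u)/\partial z_{t,v} = \bar p(u)\bigl(\mathbf{1}[u=v]-\bar p(v)\bigr).
\]
Write $\ell_t=\sum_{u\in S_t^k}\bar p(u)\log\bar p(u) - \sum_{u\in S_t^k}\bar p(u)\log\bar q(u)$. Differentiating each summand gives $\partial \ell_t/\partial \bar p(u) = \log\frac{\bar p(u)}{\bar q(u)} + 1$. The constant $+1$ drops out after contracting with the Jacobian, because $\sum_u \partial\bar p(u)/\partial z_{t,v}=0$ (the restricted distribution always sums to one). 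The chain rule then gives
\[
\frac{\partial \ell_t}{\partial z_{t,v}}=\sum_{u\in S_t^k}\bar p(u)\bigl(\mathbf{1}[u=v]-\bar p(v)\bigr)\log\frac{\bar p(u)}{\bar q(u)} = \bar p(v)\Bigl[\log\frac{\bar p(v)}{\bar q(v)} - \ell_t\Bigr],
\]
where we recognize $\sum_u \bar p(u)\log\frac{\bar p(u)}{\bar q(u)}=\ell_t$.

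For the sum-to-zero claim, summing the displayed gradient over $v\in S_t^k$ gives
\[
\sum_{v\in S_t^k}\bar p(v)\log\frac{\bar p(v)}{\bar q(v)} - \ell_t\sum_{v\in S_t^k}\bar p(v) = \ell_t-\ell_t = 0.
\]
Alternatively, $\ell_t$ is invariant to adding a common constant to all logits in $S_t^k$, and its directional derivative along $\sum_{v\in S_t^k} e_v$ is exactly this sum.

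The only delicate point is justifying the first reduction: that the full-vocabulary normalizer cancels, and that $S_t^k$ is locally constant in $\theta$. Both follow because $S_t^k$ is determined by the teacher alone. I would state this explicitly at the outset; everything after it is routine softmax calculus.
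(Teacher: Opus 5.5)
Your proof is correct and follows essentially the same route as the paper: the vocabulary normalizer cancels so $\bar p$ is a softmax restricted to $S_t^k$, then the restricted softmax Jacobian and the chain rule with $\partial\ell_t/\partial\bar p(u)=\log(\bar p(u)/\bar q(u))+1$ give the formula, and summing over $S_t^k$ gives zero. The only cosmetic difference is that you drop the $+1$ up front because each column of the Jacobian sums to zero, whereas the paper carries it through and cancels it via $\sum_{u\in S_t^k}\bar p(u)\bigl(\log\frac{\bar p(u)}{\bar q(u)}+1\bigr)=\ell_t+1$; your shift-invariance remark is a valid extra check not in the paper.
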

The proof is provided in  Appendix~\ref{app:topk_gradient}.
The proposition shows that the normalized objective provides no explicit mechanism for decreasing the student's tail probability.
Decreasing the tail probability requires raising the top-$k$ logits relative to the tail logits. 
However, the normalized objective can do neither: the tail logits receive zero gradient, and the gradients on the top-$k$ logits sum to zero.
In the following, we further explore how the zero-sum gradient within the top-$k$ tokens affects the student's tail probability.


\begin{proposition}
\label{prop:tail_prob_increase}
Assume that the student $\pi_\theta$ is a tabular softmax policy, where each token $v$ at each step $t$ is
associated with an independent logit parameter $z_{t,v}=\theta_{t,v}$.
Write $\bar p:=\bar p_t^{(S_t^k)}$ and $\bar q:=\bar q_t^{(S_t^k)}$ for the normalized
distributions on $S_t^k$, and let $\operatorname*{Cov}_{v\sim r}(f,g)$ denote the
covariance of $f(v)$ and $g(v)$ under $v\sim r$.
After one gradient descent step
$\theta \leftarrow \theta - \eta\,\frac{\partial\ell_t^{\mathrm{norm}}}{\partial \theta}$
with learning rate $\eta$, the student's tail probability change $\Delta p_t^{\mathrm{tail}}$ is given by:
\[
\Delta p_t^{\mathrm{tail}}
= \eta\, p_t^{\mathrm{tail}}\bigl(1-p_t^{\mathrm{tail}}\bigr)
\Bigl[
\underbrace{\operatorname*{Cov}_{v\sim\bar p}\bigl(\bar p(v),\log\bar p(v)\bigr)}_{\text{ \normalsize student self-covariance}}
-
\underbrace{\operatorname*{Cov}_{v\sim\bar p}\bigl(\bar p(v),\log\bar q(v)\bigr)}_{\text{\normalsize student-teacher covariance}}
\Bigr] + O(\eta^2).
\]
Consequently, the student's tail probability increases under first-order approximation
if and only if the student self-covariance exceeds the student--teacher covariance.
\end{proposition}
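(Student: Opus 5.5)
We will prove this by a first-order Taylor expansion of $p_t^{\mathrm{tail}}$ in the logits, plugging in the gradient from Proposition~\ref{prop:topk_gradient}. Under the tabular assumption, the gradient step changes only the logits at step $t$, and it does so by $\Delta z_{t,v} = -\eta\, g_v$, where $g_v := \partial \ell_t^{\mathrm{norm}}/\partial z_{t,v}$. Since $p_t^{\mathrm{tail}}$ is a smooth (softmax) function of $z_t$, we have
\[
\Delta p_t^{\mathrm{tail}} = \sum_{v\in\mathcal{V}} \frac{\partial p_t^{\mathrm{tail}}}{\partial z_{t,v}}\,\Delta z_{t,v} + O(\eta^2).
\]
The remainder is $O(\eta^2)$ because the Hessian of the softmax is bounded and $\Delta z_t = O(\eta)$.

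Next we compute the softmax derivatives. For $v\in S_t^k$ we get $\partial p_t^{\mathrm{tail}}/\partial z_{t,v} = -p_t^{\mathrm{tail}}\,p_t(v)$. For $v\notin S_t^k$ the derivative is $p_t(v)(1-p_t^{\mathrm{tail}})$, but these terms drop out, because Proposition~\ref{prop:topk_gradient} gives $g_v=0$ for tail tokens. Therefore
\[
\Delta p_t^{\mathrm{tail}} = \eta\, p_t^{\mathrm{tail}} \sum_{v\in S_t^k} p_t(v)\, g_v + O(\eta^2).
\]
For $v\in S_t^k$ we have $p_t(v) = (1-p_t^{\mathrm{tail}})\,\bar p(v)$, which yields
\[
\Delta p_t^{\mathrm{tail}} = \eta\, p_t^{\mathrm{tail}}(1-p_t^{\mathrm{tail}}) \sum_{v\in S_t^k}\bar p(v)\, g_v + O(\eta^2).
\]

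It remains to identify $\sum_v \bar p(v) g_v$ with the covariance difference. Substitute $g_v = \bar p(v)\bigl[\log\bar p(v)-\log\bar q(v)-\ell_t\bigr]$ and write $h(v) := \log\bar p(v) - \log\bar q(v)$. The sum then becomes
\[
\E_{\bar p}[\bar p\, h] - \ell_t\,\E_{\bar p}[\bar p].
\]
Since $\ell_t = D_{\mathrm{KL}}(\bar p\|\bar q) = \E_{\bar p}[h]$, this equals $\operatorname{Cov}_{v\sim\bar p}(\bar p(v), h(v))$. Bilinearity of covariance then splits it into $\operatorname{Cov}(\bar p,\log\bar p) - \operatorname{Cov}(\bar p,\log\bar q)$, which is the claimed formula.

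For the ``consequently'' clause, suppose $0<p_t^{\mathrm{tail}}<1$ and $\eta>0$. Then the prefactor $\eta\,p_t^{\mathrm{tail}}(1-p_t^{\mathrm{tail}})$ is positive, so the sign of the first-order term is the sign of the bracket. The main obstacle is only sign bookkeeping. Three points need care: the descent step contributes a minus sign, which cancels the minus in $-p_t^{\mathrm{tail}}p_t(v)$; the tail-token derivatives must be correctly discarded via the zero gradient; and $\ell_t$ must be recognized as the $\bar p$-mean of $h$, which is what turns the sum into a covariance.
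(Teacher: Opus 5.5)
Your proof is correct and follows essentially the same route as the paper. The paper writes $p_t^{\mathrm{tail}}(\eta)=B/(A(\eta)+B)$, with the tail normalizer $B=\sum_{v\notin S_t^k}e^{z_{t,v}}$ held fixed by the zero tail gradient, and differentiates $A(\eta)$; this gives the same directional derivative you get from the softmax Jacobian $\partial p_t^{\mathrm{tail}}/\partial z_{t,v}=-p_t^{\mathrm{tail}}p_t(v)$ on $S_t^k$, and your remaining steps (substituting $p_t(v)=(1-p_t^{\mathrm{tail}})\bar p(v)$, recognizing $\ell_t$ as the $\bar p$-mean of the log-ratio to get a covariance, and splitting by bilinearity) match the paper's proof.
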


The proof is provided in  Appendix~\ref{app:tail_prob_increase}.
The proposition\footnote{
We use the proposition as a motivation rather than a formal theoretical guarantee.
} shows that minimizing the normalized objective strictly increases the student's tail probability if the student self-covariance exceeds the student--teacher covariance.
The student self-covariance is nonnegative, since $\log \bar p$ is monotone with $\bar p$.
In contrast, the student-teacher covariance is positive only when the student concentrates its probability on the teacher's high-probability tokens within $S_t^k$, and can be negative when the two distributions are poorly aligned.

In practice, when the student-teacher capability gap is large, the student cannot imitate the teacher's next-token distribution well, and its probability mass cannot concentrate on the teacher's high-probability tokens.
As a result, the student--teacher covariance tends to be smaller than the student self-covariance, making the normalized objective exhibit a systematic bias toward increasing the student's tail probability.
This analysis motivates us to empirically examine the existence of the tail probability increase and its influence on the training dynamics of OPD.

\subsection{Empirical Study on the Tail Probability Increase}
\label{subsec:empirical_anlysis}
\paragraph{Setup.}
We conduct experiments with three student-teacher model pairs:
Qwen3-1.7B~\citep{yang2025qwen3} with Qwen3-30B-A3B-Instruct-2507, Qwen2.5-7B-Instruct with OpenThinker3-7B, and
Llama-3.1-8B~\citep{grattafiori2024llama} with DeepSeek-R1-Distill-Llama-8B~\citep{guo2025deepseek}.
All models are trained on DAPO-MATH-17K~\citep{yu2026dapo} with $k=16$.


\begin{figure}
    \centering
    \includegraphics[width=1.0\linewidth]{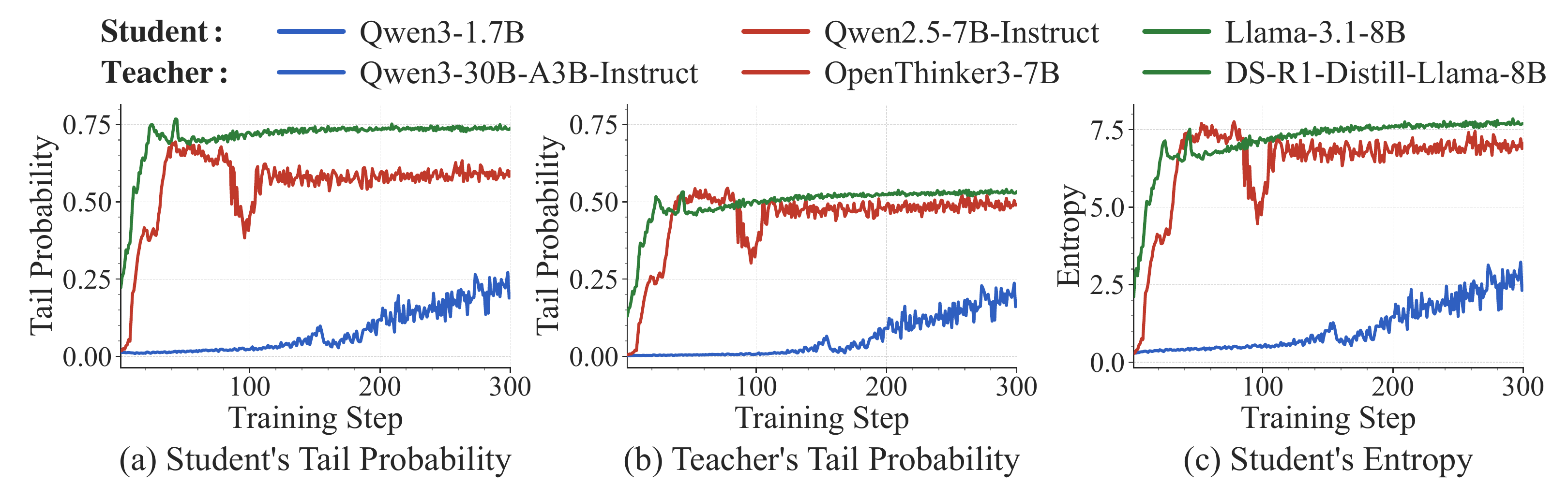}
    \caption{
\textbf{The student's tail probability, the teacher's tail probability, and the student's token-level entropy across training steps.}
Normalized top-$k$ OPD consistently increases the student's tail probability and entropy across all three model pairs.
}
    \label{fig:mass_entropy}
\end{figure}

\paragraph{Normalized top-$k$ OPD steadily increases the student's tail probability.}
Figure~\ref{fig:mass_entropy} visualizes the students' and teachers' tail probabilities on student-generated prefixes, together with the students' entropy across training steps. 
The results show that across three model pairs, the student's tail probability and entropy steadily increase during on-policy training, validating our theoretical analysis.
Furthermore, as the student's tail probability increases, it more frequently samples tokens outside the teacher's top-$k$ tokens.
This drives the student toward reaching prefixes where the teacher is uncertain, as reflected by the increase in the teacher's tail probability.
Prior works show that the teacher's supervision is unreliable on such uncertain prefixes~\citep{fu2026revisiting, xie2026position}, as the teacher itself exhibits significantly reduced accuracy given these prefixes.
Consequently, normalized top-$k$ OPD progressively shifts probability mass away from the teacher's top-$k$ tokens, making the student fail to imitate the teacher's next-token distribution.


\begin{wrapfigure}{r}{0.35\textwidth}
\vspace{-\intextsep}
\centering
\includegraphics[width=\linewidth]{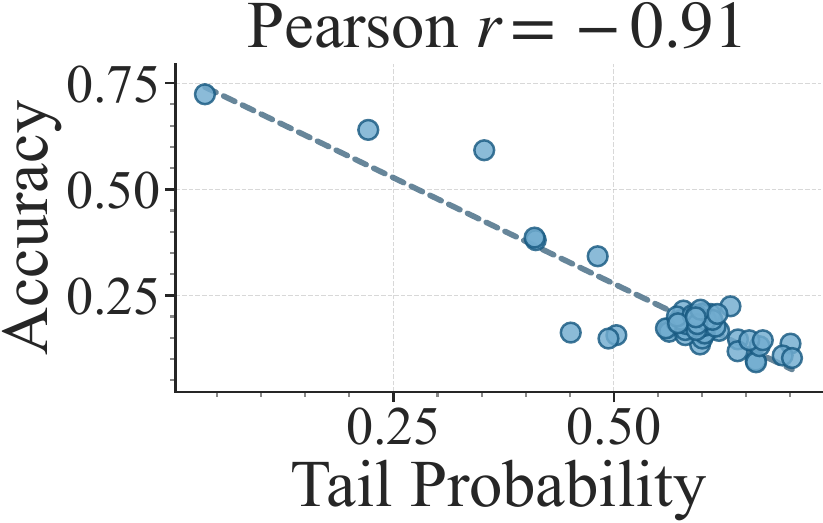}
\caption{\textbf{Validation accuracy versus student's tail probability.}}
\label{fig:tail_vs_acc}
\vspace{-\intextsep}
\end{wrapfigure}
\paragraph{Tail probability increase hurts downstream performance.}
Figure~\ref{fig:tail_vs_acc} plots validation accuracy against the student's tail probability. Each point is a checkpoint from a normalized top-$k$ OPD run with Qwen2.5-7B-Instruct as the student and OpenThinker3-7B as the teacher.
The validation set is MATH-500, and the validation sampling temperature is $1$.
The validation accuracy degrades significantly from above $0.7$ to below $0.2$ as the tail
probability grows from near $0$ to around $0.7$, with a Pearson
correlation coefficient of $-0.91$.

In Appendix~\ref{sec:additional_analysis}, we conduct a detailed analysis on the tail probability increase phenomenon.
We find that it occurs when there is a large capability gap between the student and teacher and when $k$ is not sufficiently large (e.g., $k\leq64$).
Moreover, the phenomenon is more pronounced when the maximum training response length becomes longer.

\begin{figure*}
    \centering
    \includegraphics[width=1.0\linewidth]{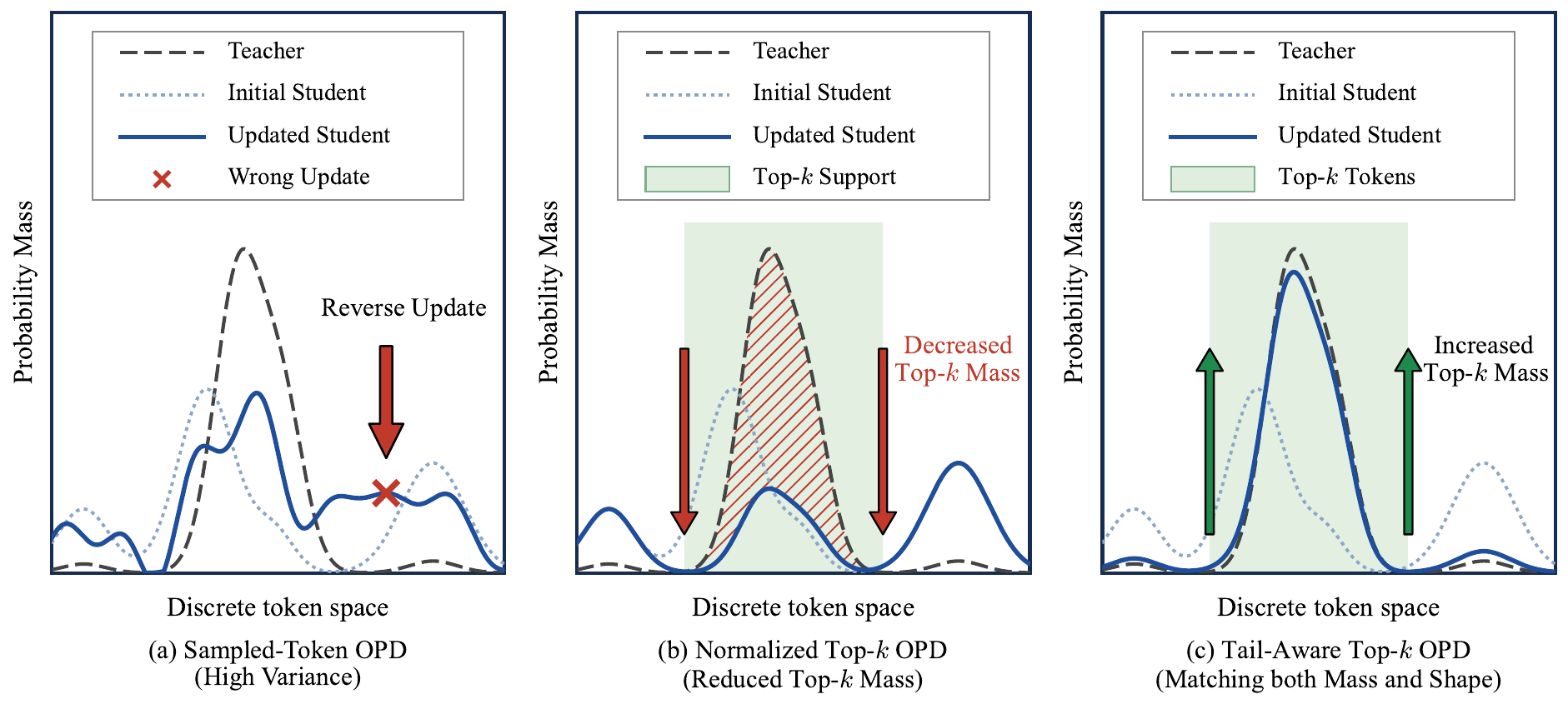}
    \caption{\textbf{Geometric Interpretation of OPD Objectives.}
\textbf{(a) Sparse Update (Sampled-Token OPD):} The supervision is derived from a single sampled token, making the update direction stochastic and prone to updating some token logits in erroneous directions.
\textbf{(b) Increased Tail Probability (Normalized Top-$k$ OPD):} The normalized objective discards the tail probability, leading to increased tail probability.
\textbf{(c) Decreased Tail Probability (Tail-Aware Top-$k$ OPD, Ours):}  TA-OPD restores the missing tail probability signal, matching the student's tail probability to the teacher's.
}

    \label{fig:geometry}
\end{figure*}

\section{Method}
In our previous analysis, we show that the normalized objective discards information about the tail probability, leading to the tail probability increase issue (Figure~\ref{fig:geometry} (b)). To address this problem, our key idea is to restore the missing tail probability signal.


\paragraph{Tail-Aware Top-$k$ OPD.}
We propose Tail-Aware Top-$k$ OPD (TA-OPD), which explicitly aligns the student's tail probability with the teacher's (Figure~\ref{fig:geometry} (c)). 
In particular, we introduce a tail token $v_{\mathrm{tail}}$ that carries the tail probability.
We then minimize the reverse KL divergence over the top-$k$ tokens plus
the tail token.
Formally, we define the augmented token set
$S_t^{+}=S_t^k\cup\{v_{\mathrm{tail}}\}$.
On $S_t^{+}$, each top-$k$ token retains its original probability,
$p_t(v)=\pi_\theta(v\mid x,\hat y_{<t})$, while the tail token carries the
tail probability, $p_t(v_{\mathrm{tail}})=p_t^{\mathrm{tail}}$.
The per-token loss function of TA-OPD is then given by:
\begin{equation}
\ell_t^{\mathrm{TA}}
= \sum_{v\in S_t^{+}} p_t(v)\log\frac{p_t(v)}{q_t(v)}
= \sum_{v\in S_t^k} p_t(v)\log\frac{p_t(v)}{q_t(v)}
  + p_t^{\mathrm{tail}}\log\frac{p_t^{\mathrm{tail}}}{q_t^{\mathrm{tail}}}.
\label{eq:lma}
\end{equation}
By introducing the tail token $v_{\mathrm{tail}}$, the term $p_t^{\mathrm{tail}}\log\frac{p_t^{\mathrm{tail}}}{q_t^{\mathrm{tail}}}$ explicitly compares the student's and teacher's tail probabilities, thereby driving the student's tail probability toward the teacher's.
We next characterize the gradient signal of $\ell_t^{\mathrm{TA}}$.

\begin{proposition}
\label{prop:ta}
For every step $t$, the gradient of
$\ell_t^{\mathrm{TA}}$ with respect to the student logit $z_{t,v}$ is
\begin{equation}
\frac{\partial \ell_t^{\mathrm{TA}}}{\partial z_{t,v}} =
\begin{cases}
p_t(v)\Big(\log\dfrac{p_t(v)}{q_t(v)}-\ell_t^{\mathrm{TA}}\Big), & v\in S_t^k,\\[1.2em]
p_t(v)\Big(\log\dfrac{p_t^{\mathrm{tail}}}{q_t^{\mathrm{tail}}}-\ell_t^{\mathrm{TA}}\Big), & v\notin S_t^k.
\end{cases}
\label{eq:gma}
\end{equation}
\end{proposition}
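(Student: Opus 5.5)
The plan is a direct chain-rule computation through the softmax. The teacher quantities $q_t(v)$ and $q_t^{\mathrm{tail}}$ do not depend on $z_t$. So I will view $\ell_t^{\mathrm{TA}}$ as a function of the $k+1$ probabilities $\{p_t(u)\}_{u\in S_t^k}$ and $p_t^{\mathrm{tail}}$, and then differentiate these with respect to $z_{t,v}$.

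\textbf{Step 1: partial derivatives in the probabilities.} For $u\in S_t^k$,
\[
\frac{\partial \ell_t^{\mathrm{TA}}}{\partial p_t(u)} = \log\frac{p_t(u)}{q_t(u)} + 1 .
\]
For the tail token,
\[
\frac{\partial \ell_t^{\mathrm{TA}}}{\partial p_t^{\mathrm{tail}}} = \log\frac{p_t^{\mathrm{tail}}}{q_t^{\mathrm{tail}}} + 1 .
\]
To write both uniformly, let $g(u)$ denote the log-ratio on the augmented set $S_t^{+}$, with $g(v_{\mathrm{tail}})=\log(p_t^{\mathrm{tail}}/q_t^{\mathrm{tail}})$.

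\textbf{Step 2: softmax Jacobians.} The standard identity is
\[
\frac{\partial p_t(u)}{\partial z_{t,v}} = p_t(u)\bigl(\mathbf{1}[u=v]-p_t(v)\bigr).
\]
Summing it over $u\notin S_t^k$ gives
\[
\frac{\partial p_t^{\mathrm{tail}}}{\partial z_{t,v}} = \mathbf{1}[v\notin S_t^k]\,p_t(v) - p_t^{\mathrm{tail}}\,p_t(v).
\]
In words, the augmented distribution on $S_t^{+}$ behaves like a softmax in which $v$ belongs to exactly one ``cell'': itself if $v\in S_t^k$, or $v_{\mathrm{tail}}$ otherwise.

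\textbf{Step 3: combine.} The chain rule gives
\[
\frac{\partial \ell_t^{\mathrm{TA}}}{\partial z_{t,v}} = p_t(v)\bigl(g(c(v))+1\bigr) - p_t(v)\sum_{u\in S_t^{+}} p_t(u)\bigl(g(u)+1\bigr),
\]
where $c(v)$ is the cell containing $v$. The probabilities on $S_t^{+}$ sum to one, and $\sum_{u\in S_t^{+}} p_t(u)g(u)=\ell_t^{\mathrm{TA}}$. Hence the $+1$ terms cancel and the expression reduces to
\[
p_t(v)\bigl(g(c(v))-\ell_t^{\mathrm{TA}}\bigr).
\]
Splitting on whether $v\in S_t^k$ yields exactly the two cases of Eq.~\eqref{eq:gma}.

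The only point requiring care is the tail case. There, $z_{t,v}$ affects $\ell_t^{\mathrm{TA}}$ only through $p_t^{\mathrm{tail}}$ (its own positive contribution) and through the shared normalizer. So one must check that the normalizer contributions aggregate to $-p_t(v)\,\ell_t^{\mathrm{TA}}$ rather than to a sum over the full vocabulary. This follows because $\sum_{u\in S_t^k}p_t(u)+p_t^{\mathrm{tail}}=1$. The argument mirrors the computation behind Proposition~\ref{prop:topk_gradient}, with the normalized distribution replaced by the augmented one.
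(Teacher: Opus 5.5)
Your proof is correct and follows essentially the same route as the paper: a chain-rule computation through the softmax Jacobian, in which the $+1$ terms cancel because the probabilities sum to one and the weighted log-ratios sum to $\ell_t^{\mathrm{TA}}$. The only difference is bookkeeping. You aggregate the tail into $p_t^{\mathrm{tail}}$ and differentiate it directly via $\partial p_t^{\mathrm{tail}}/\partial z_{t,v}$, whereas the paper differentiates with respect to each tail token's probability (all sharing the partial $\log(p_t^{\mathrm{tail}}/q_t^{\mathrm{tail}})+1$) and sums over the full vocabulary, which amounts to the same computation.
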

The proposition reveals two key properties of TA-OPD. 
First, while the normalized objective gives no gradient to logits outside the top-$k$ tokens, TA-OPD updates them through the tail log-ratio $\log\tfrac{p_t^{\mathrm{tail}}}{q_t^{\mathrm{tail}}}$, which compares the student's tail probability with the teacher's.
Second, the gradients on the top-$k$ logits no longer sum to zero.
The top-$k$ gradient sum
$
\sum_{v\in S_t^k}\frac{\partial \ell_t^{\mathrm{TA}}}{\partial z_{t,v}}
= p_t^{\mathrm{tail}}\Bigl(\ell_t^{\mathrm{TA}}-\log\tfrac{p_t^{\mathrm{tail}}}{q_t^{\mathrm{tail}}}\Bigr)$ is negative whenever the tail log-ratio is larger than the average log-ratio over $S_t^k$.
It therefore can explicitly raise the top-$k$ logits relative to the tail whenever the student over-weights the tail relative to the teacher.
In the following, we show that $\ell_t^{\mathrm{TA}}$ is not merely a heuristic modification but a principled approximation of the full-vocabulary objective.

\begin{proposition}[Lower bound of the full-vocabulary reverse KL]
\label{prop:bound}
Let $\tilde p_t(v)=p_t(v)/p_t^{\mathrm{tail}}$ and $\tilde q_t(v)=q_t(v)/q_t^{\mathrm{tail}}$ for $v\notin S_t^k$ be the normalized tail distributions. Then
\begin{equation}
 \ell_t^{\mathrm{TA}}
=  \ell_t^{\mathrm{full}} - p_t^{\mathrm{tail}}\,D_{\mathrm{KL}}(\tilde p_t\|\tilde q_t)\leq \ell_t^{\mathrm{full}}.
\label{eq:gap}
\end{equation}

\end{proposition}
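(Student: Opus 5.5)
The plan is to prove the identity by splitting the full-vocabulary sum into the top-$k$ part and the tail part, and then rewrite the tail part using the chain rule of KL divergence (the grouping identity). Since $S_t^k$ and its complement partition $\mathcal{V}$, we write
\[
\ell_t^{\mathrm{full}} = \sum_{v\in S_t^k} p_t(v)\log\frac{p_t(v)}{q_t(v)} + \sum_{v\notin S_t^k} p_t(v)\log\frac{p_t(v)}{q_t(v)}.
\]
The first sum is exactly the top-$k$ term of $\ell_t^{\mathrm{TA}}$ in Eq.~\eqref{eq:lma}. It therefore suffices to show that the tail sum equals $p_t^{\mathrm{tail}}\log\frac{p_t^{\mathrm{tail}}}{q_t^{\mathrm{tail}}} + p_t^{\mathrm{tail}}\,D_{\mathrm{KL}}(\tilde p_t\|\tilde q_t)$.

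For $v\notin S_t^k$, substitute $p_t(v)=p_t^{\mathrm{tail}}\tilde p_t(v)$ and $q_t(v)=q_t^{\mathrm{tail}}\tilde q_t(v)$. The log-ratio then splits as $\log\frac{p_t^{\mathrm{tail}}}{q_t^{\mathrm{tail}}}+\log\frac{\tilde p_t(v)}{\tilde q_t(v)}$. Summing against $p_t(v)=p_t^{\mathrm{tail}}\tilde p_t(v)$ and using $\sum_{v\notin S_t^k}\tilde p_t(v)=1$, the first piece gives $p_t^{\mathrm{tail}}\log\frac{p_t^{\mathrm{tail}}}{q_t^{\mathrm{tail}}}$ and the second gives $p_t^{\mathrm{tail}}D_{\mathrm{KL}}(\tilde p_t\|\tilde q_t)$. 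Rearranging yields the equality in Eq.~\eqref{eq:gap}. The inequality $\ell_t^{\mathrm{TA}}\le\ell_t^{\mathrm{full}}$ then follows from $p_t^{\mathrm{tail}}\ge 0$ and Gibbs' inequality $D_{\mathrm{KL}}(\tilde p_t\|\tilde q_t)\ge 0$.

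The only real obstacle is the degenerate cases in which the normalized tail distributions are undefined. If $p_t^{\mathrm{tail}}=0$, every tail term vanishes under the convention $0\log 0=0$, and the tail term of $\ell_t^{\mathrm{TA}}$ is also $0$. The correction term is then interpreted as $0$, so the identity holds trivially. If $q_t^{\mathrm{tail}}=0$ while $p_t^{\mathrm{tail}}>0$, both sides are $+\infty$. For softmax policies with full support, neither case actually arises, since all probabilities are strictly positive. I would state this positivity assumption explicitly and remark on the conventions, so that the algebra above is fully justified.

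Tightness means the bound is attained exactly when $D_{\mathrm{KL}}(\tilde p_t\|\tilde q_t)=0$, that is, when the student's normalized tail distribution matches the teacher's. If tightness is to be claimed, this is read directly off the identity.
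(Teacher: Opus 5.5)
Your proposal is correct and follows essentially the same route as the paper's proof: split $\ell_t^{\mathrm{full}}$ into top-$k$ and tail sums, substitute $p_t(v)=p_t^{\mathrm{tail}}\tilde p_t(v)$ and $q_t(v)=q_t^{\mathrm{tail}}\tilde q_t(v)$ on the tail, and separate the logarithm. Your explicit appeal to Gibbs' inequality for the final $\le$ and your remarks on the degenerate cases are small additions that the paper leaves implicit.
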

The proofs of the above propositions are presented in Appendix~\ref{app:ta} and \ref{app:bound}.
This proposition shows that the loss functions of TA-OPD and full-vocabulary OPD differ only by a non-negative residual term $p_t^{\mathrm{tail}}\,D_{\mathrm{KL}}(\tilde p_t\|\tilde q_t)$.
In practice, TA-OPD drives both the student's and teacher's tail probabilities close to zero (see Figure~\ref{fig:mass_entropy_comparison} (a) and (b)). 
As a result, the residual term becomes small, making $\ell_t^{\mathrm{TA}}$ a tight lower bound to the ideal full-vocabulary reverse KL objective.

\section{Experiments}
\subsection{Experimental Setup}

\paragraph{Models and Training Dataset.}
For main experiments, we use the three student-teacher model pairs as in section~\ref{subsec:empirical_anlysis}: Qwen3-1.7B with Qwen3-30B-A3B-Instruct-2507, Qwen2.5-7B-Instruct with OpenThinker3-7B, and
Llama-3.1-8B with DeepSeek-R1-Distill-Llama-8B.
We disable thinking modes for Qwen3 models.
The training set is DAPO-MATH-17K.

\paragraph{Evaluation.}
We evaluate on six math reasoning benchmarks---MATH500, Minerva~\citep{lewkowycz2022solving}, OlympiadBench~\citep{he2024olympiadbench}, AMC~\citep{li2024numinamath} and AIME24/25~\citep{li2024numinamath}---and two out-of-distribution benchmarks, ARC-c~\citep{clark2018think} and MMLU-Pro~\citep{wang2024mmlu}.
We use a rollout temperature of 0.7, top-p sampling with p = 0.95, and a maximum response length of 8192 tokens.
For MMLU-Pro, we report Pass@1. For other datasets, we sample 8 responses per question and report the average accuracy (Avg@8).

\paragraph{Compared Methods.}
We compare our method with sampled-token OPD and normalized top-$k$ OPD.
In Appendix~\ref{subsec:ablate_tail}, we conduct an additional comparison with unnormalized top-$k$ OPD.
We provide a detailed introduction to these methods in Appendix~\ref{sec:methods}.

\paragraph{Implementation.}
All the OPD experiments use the same training setup: 300 training steps with a learning rate of $1\times10^{-6}$.
Unless otherwise specified, we set $k=16$ for top-$k$ OPD.
We use a prompt batch size of 72 and sample 4 rollouts per prompt, with a maximum generation length of 7168 tokens.
More details of implementation are provided in Appendix~\ref{sec:implementation_detail}.

\subsection{Results}

\newcolumntype{C}[1]{>{\centering\arraybackslash}p{#1}}

\begin{table}[t]
\centering
\caption{
\textbf{Results on math reasoning and out-of-distribution (OOD) benchmarks.}
We compare TA-OPD with sampled-token OPD (Sampled-token) and
normalized top-$k$ OPD (Norm.\ top-$k$).
Each \textbf{Avg.} column represents the macro-average.
Best results are shown in \textbf{bold}.
}
\label{tab:main}

\resizebox{\textwidth}{!}{
\setlength{\tabcolsep}{2pt}
\renewcommand{\arraystretch}{1.67}
\begin{tabular}{
    l|
    *{7}{C{1.5cm}}|
    C{1.3cm}C{1.81cm}C{1.3cm}
}
\toprule
\multirow{2}{*}{\textbf{Methods}}
& \multicolumn{7}{c|}{\textbf{In-Distribution Performance}}
& \multicolumn{3}{c}{\textbf{OOD Performance}} \\
\cmidrule(lr){2-8}
\cmidrule(lr){9-11}
& \textbf{MATH500}
& \textbf{Minerva}
& \textbf{Olympiad}
& \textbf{AMC}
& \textbf{AIME 24}
& \textbf{AIME 25}
& \textbf{Avg.}
& \textbf{ARC-c}
& \textbf{MMLU-Pro}
& \textbf{Avg.} \\
\midrule

\multicolumn{11}{c}{
    Student: \textbf{\textit{Qwen2.5-7B-Instruct}}
    \quad
    Teacher: \textbf{\textit{OpenThinker3-7B}}
} \\
\midrule

Sampled-token
& 74.60
& 31.25
& 41.22
& \textbf{45.48}
& 14.58
& 16.25
& 37.23
& 74.99
& 47.66
& 61.33 \\

Norm.\ top-$k$
& 68.78
& 25.55
& 36.02
& 41.42
& 13.75
& 15.00
& 33.42
& 23.07
& 37.38
& 30.23 \\

\rowcolor{blue!10}
\textbf{TA-OPD (Ours)}
& \textbf{77.88}
& \textbf{32.58}
& \textbf{42.41}
& 45.03
& \textbf{16.67}
& \textbf{17.50}
& \textbf{38.68}
& \textbf{76.30}
& \textbf{50.35}
& \textbf{63.33} \\
\midrule

\multicolumn{11}{c}{
    Student: \textbf{\textit{Llama-3.1-8B}}
    \quad
    Teacher: \textbf{\textit{DeepSeek-R1-Distill-Llama-8B}}
} \\
\midrule

Sampled-token
& 43.45
& 12.78
& 16.56
& 17.62
& 2.08
& 1.25
& 15.62
& 41.88
& 28.99
& 35.44 \\

Norm.\ top-$k$
& 31.30
& 9.24
& 9.96
& 11.45
& 1.25
& 0.42
& 10.60
& 10.01
& 18.33
& 14.17 \\

\rowcolor{blue!10}
\textbf{TA-OPD (Ours)}
& \textbf{50.83}
& \textbf{14.94}
& \textbf{20.65}
& \textbf{20.48}
& \textbf{2.50}
& \textbf{2.50}
& \textbf{18.65}
& \textbf{65.70}
& \textbf{34.57}
& \textbf{50.14} \\
\midrule

\multicolumn{11}{c}{
    Student: \textbf{\textit{Qwen3-1.7B}}
    \quad
    Teacher: \textbf{\textit{Qwen3-30B-A3B-Instruct-2507}}
} \\
\midrule

Sampled-token
& 82.98
& 49.49
& \textbf{34.93}
& 52.41
& \textbf{26.25}
& 19.58
& 44.27
& \textbf{83.91}
& 50.08
& 67.00 \\

Norm.\ top-$k$
& 81.80
& 47.20
& 33.28
& 50.60
& 23.33
& 17.08
& 42.22
& 82.48
& 49.66
& 66.07 \\

\rowcolor{blue!10}
\textbf{TA-OPD (Ours)}
& \textbf{83.48}
& \textbf{49.82}
& 34.50
& \textbf{52.86}
& \textbf{26.25}
& \textbf{20.83}
& \textbf{44.62}
& 83.59
& \textbf{50.42}
& \textbf{67.01} \\

\bottomrule
\end{tabular}
}
\end{table}

\paragraph{TA-OPD achieves the best performance across student--teacher model pairs.}
Table~\ref{tab:main} compares TA-OPD with sampled-token OPD and normalized top-$k$ OPD.
Evaluated on six math reasoning benchmarks, TA-OPD attains the best average accuracy on every student--teacher pair, outperforming normalized top-$k$ OPD by \textbf{+5.26} points on Qwen2.5-7B-Instruct and \textbf{+8.05} points on Llama-3.1-8B.
Notably, TA-OPD demonstrates a significantly greater advantage on MATH500 with Llama-3.1-8B over normalized top-$k$ OPD, improving Avg@8 from 31.30 to 50.83  (\textbf{+19.53} points).
Regarding out-of-distribution performance, TA-OPD also demonstrates strong performance gain: on ARC-c, it achieves 76.30 with Qwen2.5-7B-Instruct and 65.70 with Llama-3.1-8B, whereas normalized top-$k$ OPD degrades to 23.07 and 10.01, indicating that the tail probability increase not only hurts mathematical reasoning but also degrades the student's general capabilities.
Overall, these results demonstrate that TA-OPD consistently achieves the best in-distribution and out-of-distribution performance across different model pairs.

\begin{figure}
    \centering
    \includegraphics[width=1.0\linewidth]{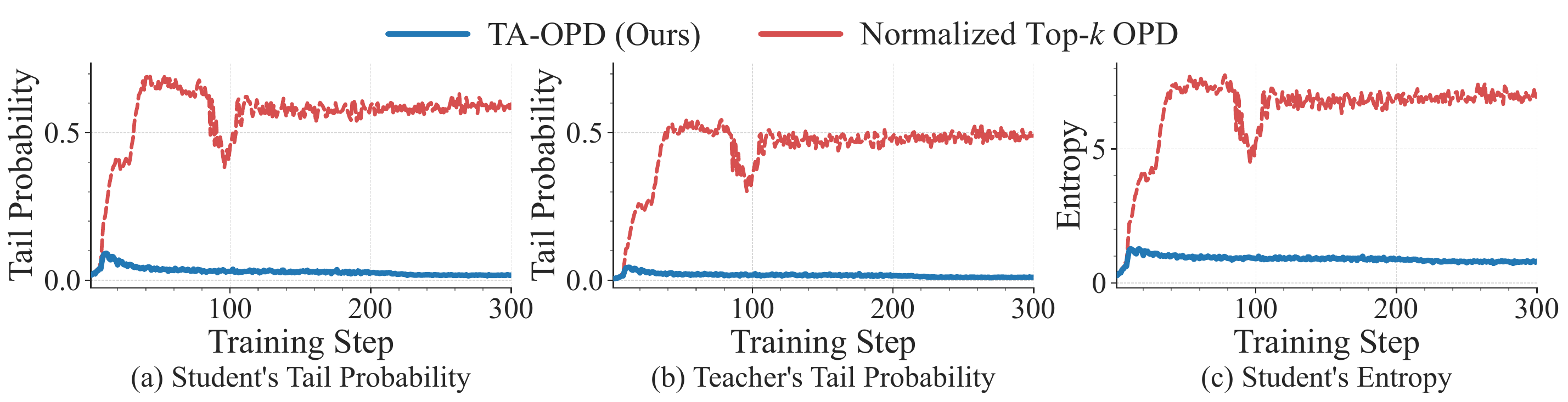}
    \caption{
\textbf{Tail probabilities and token-level entropy of TA-OPD and normalized top-$k$ OPD over training.}
The student and teacher models are Qwen2.5-7B-Instruct and OpenThinker3-7B.
}
    \label{fig:mass_entropy_comparison}
\end{figure}

\paragraph{TA-OPD prevents the increase in tail probability and entropy.}
In Figure~\ref{fig:mass_entropy_comparison}, we compare the training dynamics of TA-OPD and normalized top-$k$ OPD on the Qwen2.5-7B-Instruct and OpenThinker3-7B model pair.
With normalized top-$k$ OPD, the student's tail probability, the teacher's tail probability, and the student's entropy increase to approximately $0.6$, $0.5$, and $6$ over the course of training. 
In contrast, TA-OPD keeps the two tail probabilities close to $0$ and the student's entropy below $1.5$.
The same trends are consistently observed for three alternative model pairs in Figure~\ref{fig:mass_entropy_comparison_multi}.
In short, these results show that TA-OPD addresses the tail probability and entropy increase caused by top-$k$ normalization.


\begin{wrapfigure}[8]{r}{0.32\textwidth}
\vspace{-\baselineskip}
\centering
\includegraphics[width=\linewidth]{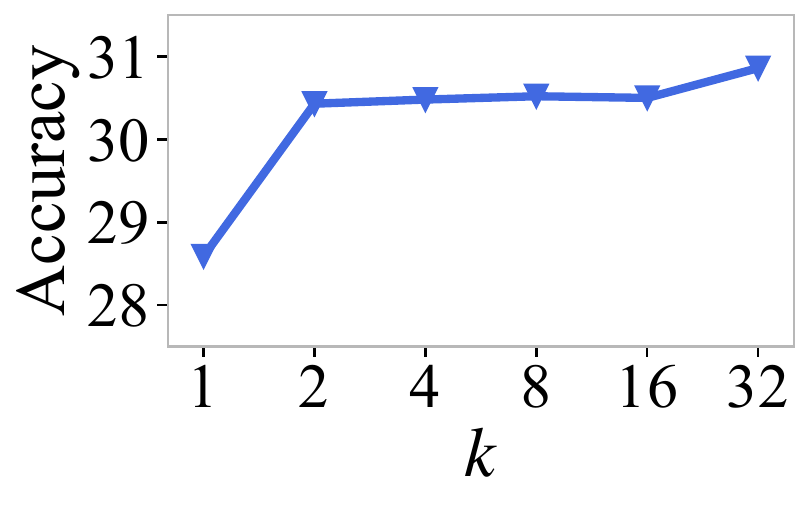}
\vspace{-0.75cm}
\caption{\textbf{Ablation study on $k$.}}
\label{fig:ablate_k_performance}
\end{wrapfigure}
\paragraph{TA-OPD is effective with a small $k$.}
In Figure~\ref{fig:ablate_k_performance}, we ablate how the number of top-$k$ tokens
affects the performance of TA-OPD.
In particular, we fix Qwen3-1.7B-Base as the student and Qwen3-8B as the teacher, and only
vary $k\in\{1,2,4,8,16,32\}$.
For each $k$, we report the average accuracy macro-averaged across the six math reasoning
benchmarks.
The results show that the accuracy is insensitive to $k$ once $k\ge 2$: $k=2$ attains an average accuracy of $30.43$, only $0.43$ points below the best result of $30.86$ at $k=32$.
Overall, TA-OPD can be applied with a small $k$.

\paragraph{Additional results.}
Due to space constraints, we defer additional analyses to the appendix.
Appendix~\ref{subsec:synthetic} conducts a synthetic experiment to provide an intuitive understanding of TA-OPD and normalized top-$k$ OPD's optimization.
Appendix~\ref{subsec:overlap} shows that TA-OPD consistently achieves higher student-teacher top-$k$ overlap ratios compared to normalized top-$k$ OPD.
Appendix~\ref{subsec:overhead} shows that TA-OPD incurs negligible computational overhead over normalized top-$k$ OPD.

\section{Discussion}

\begin{table}[t]
    \centering
    \caption{
    \textbf{Comparison between TA-OPD and sample-corrected TA-OPD on math reasoning and out-of-distribution (OOD) benchmarks.}
    Best results are shown in \textbf{bold}.
    }
    \label{tab:sc_ta}
    \resizebox{\textwidth}{!}{
        \setlength{\tabcolsep}{2pt}
        \renewcommand{\arraystretch}{1.67}
        \begin{tabular}{
            l|
            *{7}{C{1.5cm}}|
            C{1.3cm}C{1.81cm}C{1.3cm}
        }
        \toprule
        \large
        \multirow{2}{*}{\textbf{Methods}}
        & \multicolumn{7}{c|}{\textbf{In-Distribution Performance}}
        & \multicolumn{3}{c}{\textbf{OOD Performance}} \\
        \cmidrule(lr){2-8}
        \cmidrule(lr){9-11}
        & \textbf{MATH500}
        & \textbf{Minerva}
        & \textbf{Olympiad}
        & \textbf{AMC}
        & \textbf{AIME 24}
        & \textbf{AIME 25}
        & \textbf{Avg.}
        & \textbf{ARC-c}
        & \textbf{MMLU-Pro}
        & \textbf{Avg.} \\
        \midrule

        \multicolumn{11}{c}{
            Student: \textbf{\textit{DeepSeek-R1-Distill-Qwen-1.5B}}
            \quad
            Teacher: \textbf{\textit{JustRL-DeepSeek-1.5B}}
        } \\
        \midrule

        TA-OPD
        & 86.13 & 30.97 & 57.04 & 76.20
        & 41.67 & 29.73 & 53.62
        & 35.40 & \textbf{25.98} & \textbf{30.69} \\
        
        SC-TA-OPD
        & \textbf{87.00} & \textbf{32.08} & \textbf{57.65} & \textbf{76.51}
        & \textbf{42.92} & \textbf{30.17} & \textbf{54.39}
        & \textbf{35.55} & 25.76 & 30.66 \\
        \midrule

       \multicolumn{11}{c}{
    Student: \textbf{\textit{Qwen3-1.7B}}
    \quad
    Teacher: \textbf{\textit{Qwen3-30B-A3B-Instruct-2507}}
} \\
\midrule

TA-OPD
& 83.48
& \textbf{49.82}
& 34.50
& 52.86
& 26.25
& \textbf{20.83}
& 44.62
& 83.59
& \textbf{50.42}
& 67.01 \\

SC-TA-OPD
& \textbf{84.00}
& 49.54
& \textbf{34.93}
& \textbf{53.16}
& \textbf{26.67}
& 19.58
& \textbf{44.65}
& \textbf{83.91}
& 50.28
& \textbf{67.10} \\

\bottomrule
        \end{tabular}
    }
\end{table}

\paragraph{Sample-corrected TA-OPD.}
While TA-OPD's loss function is a tight lower bound of the
full-vocabulary reverse KL divergence, it remains a biased estimate of the latter.
By Proposition~\ref{prop:bound}, the bias is
$p_t^{\mathrm{tail}}\,\DKL(\tilde p_t\,\|\,\tilde q_t)$.
When the sampled token's probability is additionally available, we can estimate the bias with the sampled token $\hat y_t$, and add this estimate back to $\ell_t^{\mathrm{TA}}$ to yield an unbiased estimate of the full-vocabulary reverse KL divergence.
Formally, the loss function of sample-corrected TA-OPD is given by:
\begin{equation}
\begin{aligned}
\ell_t^{\mathrm{SC\text{-}TA}}
=&
\sum_{v\in S_t^{+}}
p_t(v)
\operatorname{sg}\!\left(
\log\frac{p_t(v)}{q_t(v)}
\right)
+
\mathbf{1}[\hat y_t\notin S_t^k]\,
\frac{p_t(\hat y_t)}{\operatorname{sg}(p_t(\hat y_t))}
\operatorname{sg}\!\left(
\log\frac{p_t(\hat y_t)/p_t^{\mathrm{tail}}}{q_t(\hat y_t)/q_t^{\mathrm{tail}}}
\right),
\end{aligned}
\label{eq:lmas}
\end{equation}
where $\operatorname{sg}(\cdot)$ denotes the stop-gradient operator.
We next characterize its theoretical property.

\begin{proposition}[Unbiasedness]
\label{prop:unbias}
For every step $t$, taking the randomness over $\hat y_t\sim p_t$, we have
\[
\E_{\hat y_t\sim p_t}\bigl[\ell_t^\mathrm{SC\text{-}TA}\bigr]
=\DKL(p_t\,\|\,q_t),
\quad
\E_{\hat y_t\sim p_t}\bigl[\nabla_\theta\,\ell_t^\mathrm{SC\text{-}TA}\bigr]
=\nabla_\theta\,\DKL(p_t\,\|\,q_t).
\]
\end{proposition}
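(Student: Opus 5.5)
We treat the value identity and the gradient identity separately, splitting $\ell_t^{\mathrm{SC\text{-}TA}}$ into its deterministic first term $A_t$ (the sum over $S_t^{+}$) and its sampled correction term $B_t(\hat y_t)$. For the value, the stop-gradient operators are the identity, so $A_t=\ell_t^{\mathrm{TA}}$ and $B_t(\hat y_t)=\mathbf{1}[\hat y_t\notin S_t^k]\log\frac{p_t(\hat y_t)/p_t^{\mathrm{tail}}}{q_t(\hat y_t)/q_t^{\mathrm{tail}}}$, because $p_t(\hat y_t)/\operatorname{sg}(p_t(\hat y_t))=1$ in value. Taking the expectation over $\hat y_t\sim p_t$ gives
\[
\E_{\hat y_t\sim p_t}[B_t]=\sum_{v\notin S_t^k}p_t(v)\log\frac{\tilde p_t(v)}{\tilde q_t(v)}=p_t^{\mathrm{tail}}\DKL(\tilde p_t\|\tilde q_t),
\]
using $p_t(v)=p_t^{\mathrm{tail}}\tilde p_t(v)$. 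Adding $A_t$ and invoking Proposition~\ref{prop:bound}, the expectation is $\ell_t^{\mathrm{TA}}+p_t^{\mathrm{tail}}\DKL(\tilde p_t\|\tilde q_t)=\ell_t^{\mathrm{full}}=\DKL(p_t\|q_t)$.

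For the gradient, we first fix the prefix $\hat y_{<t}$, so that $p_t$ depends on $\theta$ only through the logits and $q_t$ is constant. We also treat the sampling distribution of $\hat y_t$ as the source of randomness only, so the expectation is taken after differentiation. Writing $r(v)=\log\frac{p_t(v)}{q_t(v)}$ for $v\in S_t^{+}$, with the tail-token convention, the gradient of the first term is $\sum_{v\in S_t^{+}}\nabla p_t(v)\,r(v)$. The gradient of the correction is $\mathbf{1}[\hat y_t\notin S_t^k]\,\frac{\nabla p_t(\hat y_t)}{p_t(\hat y_t)}\log\frac{\tilde p_t(\hat y_t)}{\tilde q_t(\hat y_t)}$. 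Its expectation under $p_t$ cancels the $p_t(\hat y_t)$ in the denominator, which is the score-function identity, and yields $\sum_{v\notin S_t^k}\nabla p_t(v)\log\frac{\tilde p_t(v)}{\tilde q_t(v)}$.

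The target gradient is obtained by differentiating $\DKL(p_t\|q_t)=\sum_v p_t(v)\log\frac{p_t(v)}{q_t(v)}$:
\[
\nabla\DKL(p_t\|q_t)=\sum_v\nabla p_t(v)\log\frac{p_t(v)}{q_t(v)}+\sum_v\nabla p_t(v).
\]
The last sum vanishes because $\sum_v p_t(v)=1$. We therefore need to check that
\[
\sum_{v\in S_t^k}\nabla p_t(v)\,r(v)+\nabla p_t^{\mathrm{tail}}\log\frac{p_t^{\mathrm{tail}}}{q_t^{\mathrm{tail}}}+\sum_{v\notin S_t^k}\nabla p_t(v)\log\frac{\tilde p_t(v)}{\tilde q_t(v)}
\]
equals $\sum_v\nabla p_t(v)\log\frac{p_t(v)}{q_t(v)}$. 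This holds because $\nabla p_t^{\mathrm{tail}}=\sum_{v\notin S_t^k}\nabla p_t(v)$ and, for $v\notin S_t^k$, $\log\frac{p_t(v)}{q_t(v)}=\log\frac{p_t^{\mathrm{tail}}}{q_t^{\mathrm{tail}}}+\log\frac{\tilde p_t(v)}{\tilde q_t(v)}$. Summing the top-$k$ and tail pieces then reproduces the full-vocabulary sum.

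The main obstacle is being precise about what the stop-gradients mean. The paper's $\operatorname{sg}$ on the log-ratios makes the first term differentiate only through $p_t(v)$, dropping the $\sum\nabla p_t$ term, which is harmless since it is zero. The ratio $p_t(\hat y_t)/\operatorname{sg}(p_t(\hat y_t))$ must be read as contributing exactly the score $\nabla\log p_t(\hat y_t)$, and $S_t^k$ must be understood as independent of $\theta$ because it is determined by the teacher. I would state these conventions explicitly. I would also note that the claim concerns the per-step divergence at a fixed prefix, not the trajectory-level gradient through earlier sampling.
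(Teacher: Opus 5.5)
Your proposal is correct and follows the paper's proof. In the gradient part, the stop-gradients reduce the first term to $\sum_{v\in S_t^{+}}\nabla_\theta p_t(v)\log\frac{p_t(v)}{q_t(v)}$, the score-function identity turns the expected correction into $\sum_{v\notin S_t^k}\nabla_\theta p_t(v)\bigl(\log\frac{p_t(v)}{q_t(v)}-\log\frac{p_t^{\mathrm{tail}}}{q_t^{\mathrm{tail}}}\bigr)$, and $\nabla_\theta p_t^{\mathrm{tail}}=\sum_{v\notin S_t^k}\nabla_\theta p_t(v)$ cancels the tail log-ratio terms, exactly as in the paper. For the value you identify the expected correction as $p_t^{\mathrm{tail}}\DKL(\tilde p_t\|\tilde q_t)$ and close with Proposition~\ref{prop:bound}, whereas the paper expands the correction and cancels the $p_t^{\mathrm{tail}}\log\frac{p_t^{\mathrm{tail}}}{q_t^{\mathrm{tail}}}$ terms directly, but this is the same algebra.
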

The proof is provided in Appendix~\ref{app:unbias}.
Proposition~\ref{prop:unbias} shows that $\ell_t^{\mathrm{SC\text{-}TA}}$ is an unbiased estimate of the full-vocabulary reverse KL in both value and gradient, thereby combining the advantages of sampled-token OPD's unbiasedness and top-$k$ OPD's dense supervision.

Table~\ref{tab:sc_ta} compares TA-OPD with its sample-corrected variant on two student--teacher pairs.
Removing the bias term yields only marginal gains: the macro-average on mathematical reasoning improves from $53.62$ to $54.39$ for DeepSeek-R1-Distill-Qwen-1.5B and from $44.62$ to $44.65$ for Qwen3-1.7B, while the OOD averages remain essentially unchanged in both settings ($30.69$ vs.\ $30.66$ and $67.01$ vs.\ $67.10$).
We attribute this to the tightness of TA-OPD: its bias to full-vocabulary OPD $p_t^{\mathrm{tail}}\,\DKL(\tilde p_t\,\|\,\tilde q_t)$ is negligible in practice.
Overall, SC-TA-OPD provides rigorous theoretical guarantees of unbiasedness, while TA-OPD attains comparable performance to SC-TA-OPD.

\begin{figure}
    \centering
    \includegraphics[width=\linewidth]{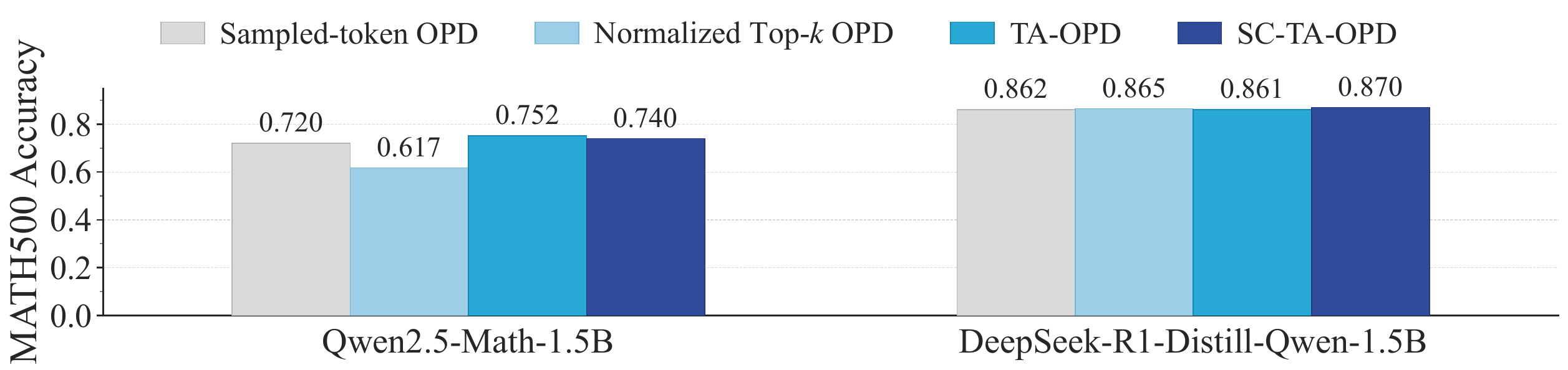}
    \caption{
\textbf{Performance under different student--teacher capability gaps.}
The students are Qwen2.5-Math-1.5B and DeepSeek-R1-Distill-Qwen-1.5B.
The teacher is JustRL-DeepSeek-1.5B. 
}
\label{fig:capability_gap}
\end{figure}

\paragraph{TA-OPD is most effective when the student--teacher capability gap is large.}
We fix JustRL-DeepSeek-1.5B~\citep{he2025justrl} as the teacher and compare two students, Qwen2.5-Math-1.5B and DeepSeek-R1-Distill-Qwen-1.5B.
DeepSeek-R1-Distill-Qwen-1.5B is obtained by applying supervised fine-tuning to Qwen2.5-Math-1.5B, and the teacher is obtained by applying reinforcement learning to DeepSeek-R1-Distill-Qwen-1.5B.
The capability gap between Qwen2.5-Math-1.5B and the teacher is therefore larger than that between DeepSeek-R1-Distill-Qwen-1.5B and the teacher.

Figure~\ref{fig:capability_gap} presents the Avg@8 accuracy of four OPD objectives on MATH500 for the two students.
On Qwen2.5-Math-1.5B, TA-OPD attains an Avg@8 of 75.20 on MATH500, outperforming normalized top-$k$ OPD by \textbf{13.50} points.
On DeepSeek-R1-Distill-Qwen-1.5B, the four objectives achieve similar performance, with sample-corrected TA-OPD attaining the best accuracy of 87.00.
This is because, as shown in Figure~\ref{fig:capability_gap_tail_prob}, the tail
probability increase is more pronounced under the larger capability gap, where TA-OPD
therefore brings a larger improvement.
In summary, TA-OPD brings substantial improvements over baselines when the
student--teacher capability gap is large.

\section{Related Work}
\paragraph{On-policy Distillation.}
On-policy distillation (OPD) is an effective post-training paradigm that has attracted a surge of interest~\citep{song2026survey}.
To provide dense supervision at tractable cost, top-$k$ OPD has become a popular research direction~\citep{zhu2026many, xing2026trust}.
Normalized top-$k$ OPD~\citep{li2026rethinking, fu2026revisiting} directly uses normalized top-$k$ reverse KL as the loss function.
Entropy-aware OPD~\citep{jin2026entropy} improves sampled-token OPD by applying normalized top-$k$ forward KL at high-entropy positions.
vOPD~\citep{oh2026kl} uses normalized top-$k$ reverse KL as a reward baseline to reduce variance of sampled-token OPD.
While these methods use normalized top-$k$ KL for different purposes, they share a common limitation: the normalized objective does not faithfully approximate the full-vocabulary KL. 
We note that ~\citet{dasgupta2026don} study a related tail probability issue, but it considers off-policy full-vocabulary distillation and is not applicable to top-$k$ OPD.
Moreover, the failure mode it studies differs from ours: it studies how full-vocabulary forward KL in the off-policy setting makes the student's tail probability too small, while we study how normalized top-$k$ objectives in the on-policy setting steadily increase it.

\paragraph{KL computation for LLM post-training.}
KL divergence is a crucial component in post-training of LLMs~\citep{vieillard2020leverage, ouyang2022training}.
In OPD, the student is optimized by minimizing the reverse KL divergence to the teacher~\citep{lu2025onpolicydistillation}, while in reinforcement learning (RL), a KL constraint against the base model is commonly imposed to prevent the policy from drifting away~\citep{ziegler2019fine, shah2025comedy}.
For efficiency, most existing methods estimate the divergence using only the sampled tokens~\citep{schulman2015trust, schulman2020approximating}.
These estimators suffer from high variance and ignore the dense information in the logit space. 
To provide dense information at tractable cost, top-k approximations of the KL divergence have become an appealing alternative~\citep{zhang2026ema}.
Existing top-$k$ OPD methods predominantly adopt a normalized top-k formulation~\citep{li2026rethinking, fu2026revisiting}, which, as we reveal in this work, leads to a steady increase in the student's tail probability.
In the context of RL, several top-k KL approximations have also been explored, yet they are devised for purposes different from ours.
For instance, DPPO~\citep{qi2026rethinking} derives a KL approximation similar to TA-OPD's loss objective, but uses it as a quantity to decide whether to clip the policy update on a token, thereby substituting heuristic PPO clipping~\citep{schulman2017proximal} with a principled constraint.
In contrast, we directly use the top-$k$ KL approximation as the optimization objective and establish its theoretical properties.
Besides, EMA-PG~\citep{zhang2026ema} proposes a KL approximation similar to our sample-corrected TA-OPD.
We provide a discussion of sample-corrected TA-OPD and EMA-PG in Appendix~\ref{subsec:ema_pg}.

\section{Conclusion}
In this paper, we introduce Tail-Aware Top-k On-Policy Distillation (TA-OPD), a novel distillation method that restores the missing tail probability signal. 
In particular, TA-OPD minimizes the reverse KL divergence over the teacher's top-$k$ tokens plus a tail token that carries the tail probability.
In effect, TA-OPD explicitly aligns the student's tail probability with the teacher's, addressing the tail probability and entropy increase caused by top-$k$ normalization.
We further derive a sampled variant that yields an unbiased estimate of the full-vocabulary reverse KL when the sampled token's probability is available. 
Extensive experiments show that TA-OPD consistently improves accuracy across benchmarks.
Our method can be easily adopted in practice: it is straightforward to implement with existing OPD frameworks and requires no additional teacher queries beyond the top-$k$ probabilities. 
We hope that our insights inspire future research to further explore loss function designs for OPD.

\paragraph{Limitations.}
The performance gain of TA-OPD over normalized top-$k$ OPD diminishes as $k$ increases or the capability gap between student and teacher decreases.
Additionally, due to limited computational resources, our experiments are restricted to models with up to 8B parameters. 
While these sizes are standard for research-stage OPD studies, we do not presume automatic transfer to 30B+ models.

\bibliography{iclr2027_conference}
\bibliographystyle{iclr2027_conference}

\clearpage

\appendix

\section{Proof}

\subsection{Proof of Proposition~\ref{prop:topk_gradient}}
\label{app:topk_gradient}

\begin{proof}
Fix a step $t$ and a prefix $\hat y_{<t}$, and omit the subscript $t$ throughout: write
$S=S_t^k=\operatorname{TopK}(q_t,k)$ for the teacher's top-$k$ tokens, $z_v=z_{t,v}$ for the
student logit of token $v$, and $\bar p=\bar p_t^{(S_t^k)}$, $\bar q=\bar q_t^{(S_t^k)}$ for
the normalized distributions on $S$, so that
\[
\ell^{\mathrm{norm}}
=\sum_{u\in S}\bar p(u)\log\frac{\bar p(u)}{\bar q(u)} .
\]
Since the token set $S$ is selected from the teacher distribution, both $S$ and $\bar q$ are
constants with respect to the student logits.
We proceed in three steps: we first show that $\bar p$ is a softmax over the logits
restricted to top-$k$ tokens $S$, then differentiate the loss through it, and finally sum the resulting gradients over $S$.

\paragraph{Step 1: $\bar p$ can be expressed as a softmax restricted to top-$k$ tokens.}
Writing the student distribution as $p(v)=e^{z_v}/\sum_{u\in\mathcal V}e^{z_u}$, the
vocabulary-level normalizer appears in both the numerator and the denominator of $\bar p$ and
therefore cancels,
\begin{equation}
\bar p(v)
=\frac{p(v)}{\sum_{w\in S}p(w)}
=\frac{e^{z_v}}{\sum_{w\in S}e^{z_w}},
\qquad v\in S .
\label{eq:restricted_softmax}
\end{equation}
Two consequences of Eq.~\eqref{eq:restricted_softmax} drive the whole proof: $\bar p$ is a
softmax over the sub-vector $(z_w)_{w\in S}$, and it does not depend on any logit outside
$S$. Its Jacobian is therefore the standard softmax Jacobian on $S$ and vanishes elsewhere,
\begin{equation}
\frac{\partial \bar p(u)}{\partial z_v}
=
\begin{cases}
\bar p(u)\bigl(\mathbf{1}[u=v]-\bar p(v)\bigr), & u,v\in S,\\[0.4em]
0, & v\notin S .
\end{cases}
\label{eq:softmax_jac}
\end{equation}

\paragraph{Step 2: the gradient with respect to logit.}
Because $\bar q$ is fixed with respect to the student logits, the loss depends on the student
only through $\{\bar p(u)\}_{u\in S}$, with
\[
\frac{\partial \ell^{\mathrm{norm}}}{\partial \bar p(u)}
=\log\frac{\bar p(u)}{\bar q(u)}+1,
\qquad u\in S .
\]
The chain rule over the vocabulary thus gives
\begin{equation}
\frac{\partial \ell^{\mathrm{norm}}}{\partial z_v}
=\sum_{u\in S}
\frac{\partial \ell^{\mathrm{norm}}}{\partial \bar p(u)} \times
\frac{\partial \bar p(u)}{\partial z_v}
=\sum_{u\in S}
\left(\log\frac{\bar p(u)}{\bar q(u)}+1\right)
\frac{\partial \bar p(u)}{\partial z_v},
\label{eq:chain_rule}
\end{equation}
where the terms with $u\notin S$ are already dropped because $\ell^{\mathrm{norm}}$ does not
involve them.
For $v\notin S$, every Jacobian entry in Eq.~\eqref{eq:softmax_jac} is zero, hence
$\partial \ell^{\mathrm{norm}}/\partial z_v=0$, which is the second case of the claim: the
loss is blind to the logits outside the teacher's top-$k$ tokens.
For $v\in S$, substituting Eq.~\eqref{eq:softmax_jac} into Eq.~\eqref{eq:chain_rule} and
separating the diagonal term $u=v$ gives
\begin{equation}
\frac{\partial \ell^{\mathrm{norm}}}{\partial z_v}
=\bar p(v)\left(\log\frac{\bar p(v)}{\bar q(v)}+1\right)
-\bar p(v)\sum_{u\in S}\bar p(u)\left(\log\frac{\bar p(u)}{\bar q(u)}+1\right).
\label{eq:logit_grad}
\end{equation}
Note that
\[
\sum_{u\in S}\bar p(u)\left(\log\frac{\bar p(u)}{\bar q(u)}+1\right)
=\sum_{u\in S}\bar p(u)\log\frac{\bar p(u)}{\bar q(u)}+\sum_{u\in S}\bar p(u)
=\ell^{\mathrm{norm}}+1.
\]
By substituding it into Eq~\ref{eq:logit_grad}, we obtain
\begin{equation}
\frac{\partial \ell^{\mathrm{norm}}}{\partial z_v}
=\bar p(v)\left[\log\frac{\bar p(v)}{\bar q(v)}-\ell^{\mathrm{norm}}\right],
\qquad v\in S,
\label{eq:topk_grad}
\end{equation}
which is the first case of the claim. Since
$\ell^{\mathrm{norm}}=\mathbb E_{v\sim\bar p}\bigl[\log(\bar p(v)/\bar q(v))\bigr]$ by
definition, Eq.~\eqref{eq:topk_grad} says that the gradient on a top-$k$ logit is its
log-ratio centered at the $\bar p$-weighted mean log-ratio.

\paragraph{Step 3: the top-$k$ gradients sum to zero.}
Summing Eq.~\eqref{eq:topk_grad} over $v\in S$ and using
$\sum_{v\in S}\bar p(v)\log\bigl(\bar p(v)/\bar q(v)\bigr)=\ell^{\mathrm{norm}}$ together with
$\sum_{v\in S}\bar p(v)=1$ yields
\[
\sum_{v\in S}\frac{\partial \ell^{\mathrm{norm}}}{\partial z_v}
=\ell^{\mathrm{norm}}-\ell^{\mathrm{norm}}
=0 .
\]
This is precisely the mean-centering exhibited in Eq.~\eqref{eq:topk_grad}: the update
redistributes logit mass among the top-$k$ tokens without changing their total, while
leaving the tail logits untouched by Step 2. The normalized objective therefore acts only on
the shape of the student distribution within $S$, and never on how much probability the
student assigns to $S$ as a whole.
\end{proof}

\subsection{Proof of Proposition~\ref{prop:tail_prob_increase}}
\label{app:tail_prob_increase}

\begin{proof}
Fix a step $t$ and a prefix $\hat y_{<t}$, and omit the subscript $t$ throughout, following
the convention of Appendix~\ref{app:topk_gradient}: write $S=S_t^k$, $z_v=z_{t,v}$,
$\bar p=\bar p_t^{(S_t^k)}$, $\bar q=\bar q_t^{(S_t^k)}$, and
$p^{\mathrm{tail}}=p_t^{\mathrm{tail}}$.
Under the tabular assumption, each logit $z_v$ is an independent parameter, so the gradient
step acts on the logits directly; this assumption is commonly adopted by prior works for
analysis~\citep{cui2025entropy, zhang2026characterizing, ren2025learning}.
We write $p^{\mathrm{tail}}(\eta)$ for the tail probability after one gradient step with
learning rate $\eta$, and $\Delta p^{\mathrm{tail}}=p^{\mathrm{tail}}(\eta)-p^{\mathrm{tail}}(0)$
for its change.
We proceed in three steps: we first reduce $p^{\mathrm{tail}}(\eta)$ to a one-dimensional
function of the step size, then identify the derivative of that function as a covariance,
and finally expand it to first order in $\eta$.

\paragraph{Step 1: only the top-$k$ logits change.}
Split the softmax normalizer into the contributions of the top-$k$ tokens and of the tail
tokens,
\[
A:=\sum_{v\in S}e^{z_v},\qquad
B:=\sum_{v\notin S}e^{z_v},\qquad
Z:=A+B,
\]
so that $p^{\mathrm{tail}}=B/Z$.
By Proposition~\ref{prop:topk_gradient}, the gradient of $\ell^{\mathrm{norm}}$ is
\[
g_v=\bar p(v)\left(\log\frac{\bar p(v)}{\bar q(v)}-\ell^{\mathrm{norm}}\right)
\ \ \text{for } v\in S,
\qquad
g_v=0 \ \ \text{for } v\notin S .
\]
The update $z_v\leftarrow z_v-\eta g_v$ therefore leaves tail logit unchanged, and
hence leaves $B$ unchanged. 
The change of the tail probability is thus determined by $A$ alone:
\begin{equation}
p^{\mathrm{tail}}(\eta)=\frac{B}{A(\eta)+B},
\qquad
A(\eta)=\sum_{v\in S}e^{z_v-\eta g_v}.
\label{eq:tail_of_eta}
\end{equation}

\paragraph{Step 2: the derivative of $A$ is a covariance.}
Differentiating $A(\eta)$ in Eq.~\eqref{eq:tail_of_eta} with respect to $\eta$ and evaluating
at $\eta=0$ gives $$A'(0)=-\sum_{v\in S}e^{z_v}g_v.$$
We now rewrite this sum in terms of the normalized distribution $\bar p$.
Using $e^{z_v}=Z\,p(v)$ and $p(v)=(1-p^{\mathrm{tail}})\,\bar p(v)$ for $v\in S$, and then
substituting the expression for $g_v$ from Step 1,
\begin{equation}
A'(0)
=-Z\bigl(1-p^{\mathrm{tail}}\bigr)\sum_{v\in S}\bar p(v)\,g_v
=-Z\bigl(1-p^{\mathrm{tail}}\bigr)
\left[
\sum_{v\in S}\bar p(v)^2\log\frac{\bar p(v)}{\bar q(v)}
-\ell^{\mathrm{norm}}\sum_{v\in S}\bar p(v)^2
\right].
\label{eq:A_prime}
\end{equation}
Each of the two sums in Eq.~\eqref{eq:A_prime} carries one factor $\bar p(v)$ that plays the
role of a sampling weight, so both are expectations under $v\sim\bar p$:
\[
\sum_{v\in S}\bar p(v)^2\log\frac{\bar p(v)}{\bar q(v)}
=\E_{v\sim\bar p}\!\left[\bar p(v)\log\frac{\bar p(v)}{\bar q(v)}\right],
\qquad
\sum_{v\in S}\bar p(v)^2=\E_{v\sim\bar p}\bigl[\bar p(v)\bigr].
\]
Moreover $\ell^{\mathrm{norm}}=\E_{v\sim\bar p}\bigl[\log(\bar p(v)/\bar q(v))\bigr]$ by
definition, so the bracket in Eq.~\eqref{eq:A_prime} is the difference between the
expectation of a product and the product of the expectations, which is exactly a covariance:
\begin{equation}
A'(0)
=-Z\bigl(1-p^{\mathrm{tail}}\bigr)
\operatorname*{Cov}_{v\sim\bar p}\!\left(\bar p(v),\,\log\frac{\bar p(v)}{\bar q(v)}\right).
\label{eq:A_prime_cov}
\end{equation}
The covariance appears because, by Proposition~\ref{prop:topk_gradient}, the gradient on a
top-$k$ logit is its log-ratio centered at the mean log-ratio, while its contribution to $A$
is weighted by its own probability $\bar p(v)$.

\paragraph{Step 3: first-order taylor expansion.}
Differentiating $p^{\mathrm{tail}}(\eta)$ in Eq.~\eqref{eq:tail_of_eta} at $\eta=0$ and
substituting Eq.~\eqref{eq:A_prime_cov} together with $B/Z=p^{\mathrm{tail}}$ yields
\[
\frac{\mathrm{d}p^{\mathrm{tail}}}{\mathrm{d}\eta}\bigg|_{\eta=0}
=-\frac{B}{(A+B)^2}\,A'(0)
=p^{\mathrm{tail}}\bigl(1-p^{\mathrm{tail}}\bigr)
\operatorname*{Cov}_{v\sim\bar p}\!\left(\bar p(v),\,\log\frac{\bar p(v)}{\bar q(v)}\right).
\]
A first-order Taylor expansion in $\eta$, followed by splitting the log-ratio as
$\log\bar p(v)-\log\bar q(v)$ and using the linearity of the covariance in its second
argument, gives the stated identity:
\[
\Delta p^{\mathrm{tail}}
=\eta\,p^{\mathrm{tail}}\bigl(1-p^{\mathrm{tail}}\bigr)
\Bigl[
\operatorname*{Cov}_{v\sim\bar p}\bigl(\bar p(v),\log\bar p(v)\bigr)
-\operatorname*{Cov}_{v\sim\bar p}\bigl(\bar p(v),\log\bar q(v)\bigr)
\Bigr]
+O(\eta^2).
\]
Since $p^{\mathrm{tail}}\in(0,1)$, the prefactor $p^{\mathrm{tail}}(1-p^{\mathrm{tail}})$ is
strictly positive, so for sufficiently small $\eta>0$ the sign of $\Delta p^{\mathrm{tail}}$
is determined by the sign of the covariance difference. The tail probability therefore
increases if and only if the student self-covariance exceeds the student--teacher covariance.
\end{proof}

\subsection{Proof of Proposition~\ref{prop:ta}}
\label{app:ta}

\begin{proof}
Fix a step $t$ and a prefix $\hat y_{<t}$, and omit the subscript $t$ throughout, following
the convention of Appendix~\ref{app:topk_gradient}: write $S=S_t^k$, $z_v=z_{t,v}$,
$p(v)=p_t(v)$, $q(v)=q_t(v)$, and $p^{\mathrm{tail}}=p_t^{\mathrm{tail}}$,
$q^{\mathrm{tail}}=q_t^{\mathrm{tail}}$, so that
\[
\ell^{\mathrm{TA}}
=\sum_{u\in S}p(u)\log\frac{p(u)}{q(u)}
+p^{\mathrm{tail}}\log\frac{p^{\mathrm{tail}}}{q^{\mathrm{tail}}},
\qquad
p^{\mathrm{tail}}=\sum_{u\notin S}p(u).
\]
Since $S$ is selected from the teacher distribution, $S$, $q$, and $q^{\mathrm{tail}}$ are
constants with respect to the student logits.
Unlike the normalized loss of Proposition~\ref{prop:topk_gradient}, $\ell^{\mathrm{TA}}$
depends on the student through the \emph{unnormalized} probabilities, and the tail tokens
enter it through their aggregate $p^{\mathrm{tail}}$.
We proceed in three steps: we first differentiate $\ell^{\mathrm{TA}}$ with respect to the
probabilities of all tokens, then push the derivative through the full softmax Jacobian, and
finally read off the two cases of the claim.

\paragraph{Step 1: the derivative with respect to probability.}
We treat $\ell^{\mathrm{TA}}$ as a function of the full probability vector
$\bigl(p(u)\bigr)_{u\in\mathcal V}$ rather than eliminating $p^{\mathrm{tail}}$.
A top-$k$ token $u\in S$ appears only in its own summand, and a tail token $u\notin S$
appears only inside $p^{\mathrm{tail}}$, on which it acts with
$\partial p^{\mathrm{tail}}/\partial p(u)=1$. Hence
\begin{equation}
\frac{\partial \ell^{\mathrm{TA}}}{\partial p(u)}
=
\begin{cases}
\log\dfrac{p(u)}{q(u)}+1, & u\in S,\\[1.0em]
\log\dfrac{p^{\mathrm{tail}}}{q^{\mathrm{tail}}}+1, & u\notin S .
\end{cases}
\label{eq:ta_dp}
\end{equation}

\paragraph{Step 2: pushing through the softmax Jacobian.}
The student distribution is a softmax over the full vocabulary,
$p(u)=e^{z_u}/\sum_{w\in\mathcal V}e^{z_w}$, whose Jacobian is
$\partial p(u)/\partial z_v=p(u)\bigl(\mathbf 1[u=v]-p(v)\bigr)$ for all
$u,v\in\mathcal V$. Combining it with Eq.~\eqref{eq:ta_dp} through the chain rule gives
\begin{equation}
\frac{\partial \ell^{\mathrm{TA}}}{\partial z_v}
=\sum_{u\in\mathcal V}
\frac{\partial \ell^{\mathrm{TA}}}{\partial p(u)}\,
\frac{\partial p(u)}{\partial z_v}
=p(v)\,\frac{\partial \ell^{\mathrm{TA}}}{\partial p(u)}\bigg|_{u=v}
-p(v)\sum_{u\in\mathcal V}p(u)\,\frac{\partial \ell^{\mathrm{TA}}}{\partial p(u)} .
\label{eq:ta_chain}
\end{equation}
Note that for the second term, we have
\[
\sum_{u\in\mathcal V}p(u)\,\frac{\partial \ell^{\mathrm{TA}}}{\partial p(u)}
=\sum_{u\in S}p(u)\left(\log\frac{p(u)}{q(u)}+1\right)
+\left(\sum_{u\notin S}p(u)\right)\!\left(\log\frac{p^{\mathrm{tail}}}{q^{\mathrm{tail}}}+1\right)
=\ell^{\mathrm{TA}}+1,
\]
where the second equality uses $\sum_{u\notin S}p(u)=p^{\mathrm{tail}}$ to recover the tail
term of $\ell^{\mathrm{TA}}$, and $\sum_{u\in\mathcal V}p(u)=1$ to collect the constant.

\paragraph{Step 3: the gradient on each logit.}
Substituting $\ell^{\mathrm{TA}}+1$ for the sum in Eq.~\eqref{eq:ta_chain}, we have
\begin{equation}
\frac{\partial \ell^{\mathrm{TA}}}{\partial z_v}
=
\begin{cases}
p(v)\left(\log\dfrac{p(v)}{q(v)}-\ell^{\mathrm{TA}}\right), & v\in S,\\[1.2em]
p(v)\left(\log\dfrac{p^{\mathrm{tail}}}{q^{\mathrm{tail}}}-\ell^{\mathrm{TA}}\right), & v\notin S,
\end{cases}
\label{eq:ta_grad}
\end{equation}
which establishes \eqref{eq:gma}.
\end{proof}

\subsection{Proof of Proposition~\ref{prop:bound}}
\label{app:bound}
\begin{proof}
Fix a step $t$ and drop the subscript $t$ as in Appendix~\ref{app:ta}.
The two loss functions are
\[
\ell^{\mathrm{full}}
=
\sum_{v\in S}p(v)\log\frac{p(v)}{q(v)}
+
\sum_{v\notin S}p(v)\log\frac{p(v)}{q(v)},
\qquad
\ell^{\mathrm{TA}}
=
\sum_{v\in S}p(v)\log\frac{p(v)}{q(v)}
+
p_{\mathrm{tail}}\log\frac{p_{\mathrm{tail}}}{q_{\mathrm{tail}}},
\]
where we have split $\ell^{\mathrm{full}}=\DKL(p\|q)$ into the top-$k$ tokens and the
tail tokens so that the two losses can be compared term by term.
The two objectives share the same top-$k$ term and differ only in how they treat the
tail, so taking the difference cancels the top-$k$ term and leaves
\begin{equation}
\ell^{\mathrm{full}}-\ell^{\mathrm{TA}}
=
\sum_{v\notin S}p(v)\log\frac{p(v)}{q(v)}
-
p_{\mathrm{tail}}\log\frac{p_{\mathrm{tail}}}{q_{\mathrm{tail}}}.
\label{eq:bound_diff}
\end{equation}

It remains to compute the right-hand side of \eqref{eq:bound_diff}. Substituting
$p(v)=p_{\mathrm{tail}}\,\tilde p(v)$ and $q(v)=q_{\mathrm{tail}}\,\tilde q(v)$ for
$v\notin S$ into its first term and splitting the logarithm gives
\[
\begin{aligned}
\sum_{v\notin S}p(v)\log\frac{p(v)}{q(v)}
&=
\sum_{v\notin S}
p_{\mathrm{tail}}\,\tilde p(v)
\log\frac{p_{\mathrm{tail}}\,\tilde p(v)}{q_{\mathrm{tail}}\,\tilde q(v)} \\
&=
p_{\mathrm{tail}}
\log\frac{p_{\mathrm{tail}}}{q_{\mathrm{tail}}}
\sum_{v\notin S}\tilde p(v)
+
p_{\mathrm{tail}}
\sum_{v\notin S}\tilde p(v)\log\frac{\tilde p(v)}{\tilde q(v)} \\
&=
p_{\mathrm{tail}}
\log\frac{p_{\mathrm{tail}}}{q_{\mathrm{tail}}}
+
p_{\mathrm{tail}}\,\DKL(\tilde p\|\tilde q),
\end{aligned}
\]
where the last equality uses that $\tilde p$ is a probability distribution over the tail
tokens, \ $\sum_{v\notin S}\tilde p(v)=1$, and recognizes the remaining sum as
$\DKL(\tilde p\|\tilde q)$.

Substituting this back into \eqref{eq:bound_diff}, the terms
$p_{\mathrm{tail}}\log\frac{p_{\mathrm{tail}}}{q_{\mathrm{tail}}}$ cancel and we obtain
\[
\ell^{\mathrm{full}}-\ell^{\mathrm{TA}}
=
p_{\mathrm{tail}}\,\DKL(\tilde p\|\tilde q).
\]
\end{proof}

\subsection{Proof of Proposition~\ref{prop:unbias}}
\label{app:unbias}

\begin{proof}
Fix step $t$ and drop the subscript $t$ as in Appendix~\ref{app:ta}: write $S=S_t^k$,
$S^{+}=S_t^{+}$, $p(v)=p_t(v)$, $q(v)=q_t(v)$, $\hat y=\hat y_t$,
$p_{\mathrm{tail}}=p_t^{\mathrm{tail}}$, and $q_{\mathrm{tail}}=q_t^{\mathrm{tail}}$.
Recall that the stop-gradient operator $\operatorname{sg}(\cdot)$ acts as the identity in the
forward pass and has zero derivative, i.e., $\operatorname{sg}(x)=x$ in value and
$\nabla_\theta \operatorname{sg}(x)=0$.
Throughout, the expectation is taken with respect to $\hat y\sim p$, where the sampling
distribution is fixed at the current parameters.
We will repeatedly use the identity
\begin{equation}
\E_{\hat y\sim p}\bigl[\mathbf{1}[\hat y\notin S]\bigr]=\sum_{v\notin S}p(v)=p_{\mathrm{tail}}.
\label{eq:tail_indicator}
\end{equation}

\paragraph{Unbiasedness in value.}
Without considering the gradient, $\frac{p(\hat y)}{\operatorname{sg}(p(\hat y))}=1$, so
\begin{align}
\ell^{\mathrm{SC\text{-}TA}}
&= \sum_{v\in S^{+}} p(v)\operatorname{sg}\!\left(\log\frac{p(v)}{q(v)}\right)
+ \mathbf{1}[\hat y\notin S]\,
\frac{p(\hat y)}{\operatorname{sg}(p(\hat y))}
\operatorname{sg}\!\left(\log\frac{p(\hat y)/p_{\mathrm{tail}}}{q(\hat y)/q_{\mathrm{tail}}}\right)
\notag \\
&= \sum_{v\in S} p(v)\log\frac{p(v)}{q(v)}
+ p_{\mathrm{tail}}\log\frac{p_{\mathrm{tail}}}{q_{\mathrm{tail}}}
+ \mathbf{1}[\hat y\notin S]\left(
\log\frac{p(\hat y)}{q(\hat y)}-\log\frac{p_{\mathrm{tail}}}{q_{\mathrm{tail}}}
\right),
\end{align}
where the second equality expands $S^{+}=S\cup\{v_{\mathrm{tail}}\}$ using
$p(v_{\mathrm{tail}})=p_{\mathrm{tail}}$ and $q(v_{\mathrm{tail}})=q_{\mathrm{tail}}$.
The first two terms are deterministic given the prefix.
For the third term, taking the expectation over $\hat y\sim p$ and using
Eq.~\eqref{eq:tail_indicator} gives
\[
\E_{\hat y\sim p}\!\left[
\mathbf{1}[\hat y\notin S]\left(
\log\frac{p(\hat y)}{q(\hat y)}-\log\frac{p_{\mathrm{tail}}}{q_{\mathrm{tail}}}
\right)\right]
=\sum_{v\notin S}p(v)\log\frac{p(v)}{q(v)}
-p_{\mathrm{tail}}\log\frac{p_{\mathrm{tail}}}{q_{\mathrm{tail}}}.
\]
Summing the three terms, the two occurrences of
$p_{\mathrm{tail}}\log\frac{p_{\mathrm{tail}}}{q_{\mathrm{tail}}}$ cancel and we obtain
\[
\E_{\hat y\sim p}\bigl[\ell^{\mathrm{SC\text{-}TA}}\bigr]
=\sum_{v\in S}p(v)\log\frac{p(v)}{q(v)}
+\sum_{v\notin S}p(v)\log\frac{p(v)}{q(v)}
=\DKL(p\,\|\,q),
\]
which establishes the unbiasedness in value.

\paragraph{Unbiasedness in gradient.}
We first compute the gradient of the exact full-vocabulary reverse KL. Since $q$ is fixed,
\[
\nabla_\theta\,\DKL(p\,\|\,q)
=\sum_{v\in\mathcal V}\nabla_\theta p(v)\left(\log\frac{p(v)}{q(v)}+1\right)
=\sum_{v\in\mathcal V}\nabla_\theta p(v)\,\log\frac{p(v)}{q(v)},
\]
where the last equality uses
$\sum_{v\in\mathcal V}\nabla_\theta p(v)=\nabla_\theta\sum_{v\in\mathcal V}p(v)=\nabla_\theta 1=0$.

We now differentiate $\ell^{\mathrm{SC\text{-}TA}}$.
Since the log-ratios are wrapped in stop-gradients, only the leading probabilities carry
gradients in the first term.
Note that $p_{\mathrm{tail}}=\sum_{v\notin S}p(v)$ depends on $\theta$, so
$\nabla_\theta p_{\mathrm{tail}}=\sum_{v\notin S}\nabla_\theta p(v)$, and therefore
\[
\nabla_\theta\sum_{v\in S^{+}}p(v)\operatorname{sg}\!\left(\log\frac{p(v)}{q(v)}\right)
=\sum_{v\in S}\nabla_\theta p(v)\,\log\frac{p(v)}{q(v)}
+\Bigl(\sum_{v\notin S}\nabla_\theta p(v)\Bigr)
\log\frac{p_{\mathrm{tail}}}{q_{\mathrm{tail}}}.
\]
For the second term, only the numerator $p(\hat y)$ in
$\frac{p(\hat y)}{\operatorname{sg}(p(\hat y))}$ carries gradients, so
\[
\nabla_\theta\!\left[
\mathbf{1}[\hat y\notin S]\,
\frac{p(\hat y)}{\operatorname{sg}(p(\hat y))}
\operatorname{sg}\!\left(\log\frac{p(\hat y)/p_{\mathrm{tail}}}{q(\hat y)/q_{\mathrm{tail}}}\right)
\right]
=\mathbf{1}[\hat y\notin S]\,
\frac{\nabla_\theta p(\hat y)}{p(\hat y)}
\left(\log\frac{p(\hat y)}{q(\hat y)}-\log\frac{p_{\mathrm{tail}}}{q_{\mathrm{tail}}}\right).
\]
Taking the expectation over $\hat y\sim p$ of the second term gives
\[
\begin{aligned}
&\E_{\hat y\sim p}\!\left[
\mathbf{1}[\hat y\notin S]\,
\frac{\nabla_\theta p(\hat y)}{p(\hat y)}
\left(\log\frac{p(\hat y)}{q(\hat y)}-\log\frac{p_{\mathrm{tail}}}{q_{\mathrm{tail}}}\right)
\right] \\
&\qquad =\sum_{v\notin S}p(v)\cdot\frac{\nabla_\theta p(v)}{p(v)}
\left(\log\frac{p(v)}{q(v)}-\log\frac{p_{\mathrm{tail}}}{q_{\mathrm{tail}}}\right)
=\sum_{v\notin S}\nabla_\theta p(v)\,\log\frac{p(v)}{q(v)}
-\Bigl(\sum_{v\notin S}\nabla_\theta p(v)\Bigr)
\log\frac{p_{\mathrm{tail}}}{q_{\mathrm{tail}}}.
\end{aligned}
\]
Combining the two terms, the contributions involving
$\log\frac{p_{\mathrm{tail}}}{q_{\mathrm{tail}}}$ cancel exactly and we obtain
\[
\E_{\hat y\sim p}\bigl[\nabla_\theta\,\ell^{\mathrm{SC\text{-}TA}}\bigr]
=\sum_{v\in S}\nabla_\theta p(v)\,\log\frac{p(v)}{q(v)}
+\sum_{v\notin S}\nabla_\theta p(v)\,\log\frac{p(v)}{q(v)}
=\nabla_\theta\,\DKL(p\,\|\,q),
\]
which establishes the unbiasedness in gradient. This completes the proof.
\end{proof}

\section{Detailed Analysis of the tail probability increase}
\label{sec:additional_analysis}

In this section, we conduct several controlled experiments to identify when the
tail probability increase occurs.
We find that it arises under two conditions: a large capability gap between the student and teacher, and an insufficiently large $k$.

\begin{figure}
    \centering
    \includegraphics[width=1.0\linewidth]{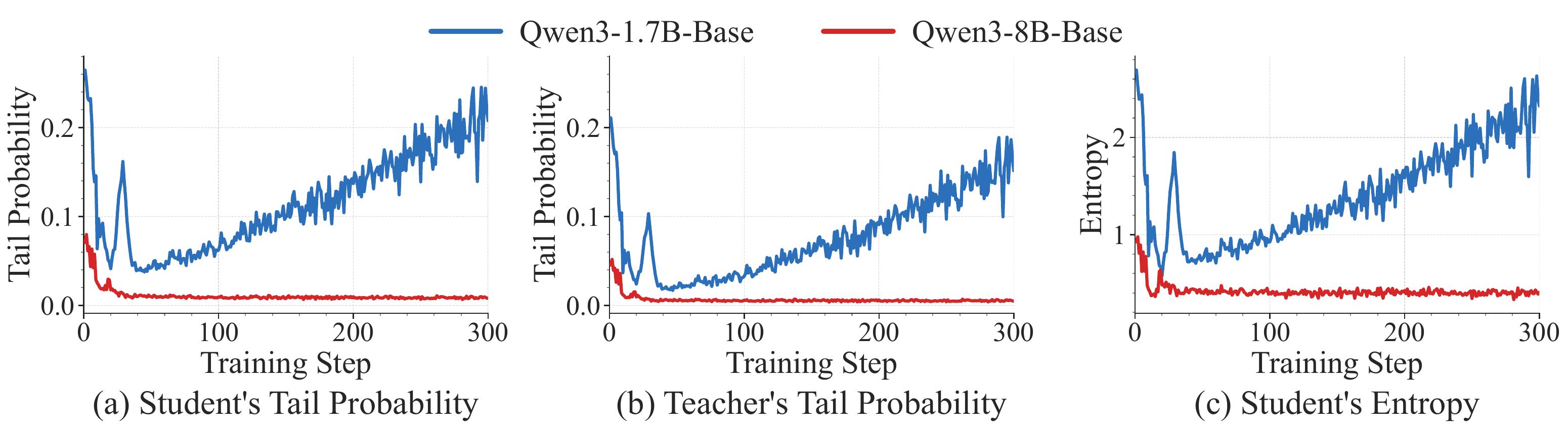}
    \caption{
\textbf{The student's tail probability, the teacher's tail probability, and the student's entropy across training steps under different student--teacher capability gaps.}
The teacher is Qwen3-8B, and the students are Qwen3-1.7B-Base and Qwen3-8B-Base.
The tail probability increase occurs for Qwen3-1.7B-Base but not for Qwen3-8B-Base.
}
    \label{fig:ablate_gap}
\end{figure}

\paragraph{The tail probability increase occurs when the student--teacher capability gap is large.}
We conduct an ablation study to demonstrate that the tail probability increase occurs when the capability gap between the student and teacher is large.
Specifically, we fix Qwen3-8B as the teacher and compare two students, Qwen3-1.7B-Base and Qwen3-8B-Base, where the latter has a smaller capability gap to the teacher.
For this ablation study, we set the maximum response length to 4096.

In Figure~\ref{fig:ablate_gap}, we visualize the student's and teacher's tail probabilities on student-generated prefixes, together with the student's entropy.
The tail probability increase emerges only under the large capability gap: for Qwen3-1.7B-Base, the student's tail probability and entropy steadily increase over training steps, whereas for Qwen3-8B-Base, both remain low and stable.
Overall, the tail probability increase arises when the student--teacher capability gap is large and is mitigated as the gap narrows.


\begin{figure}
    \centering
    \includegraphics[width=1.0\linewidth]{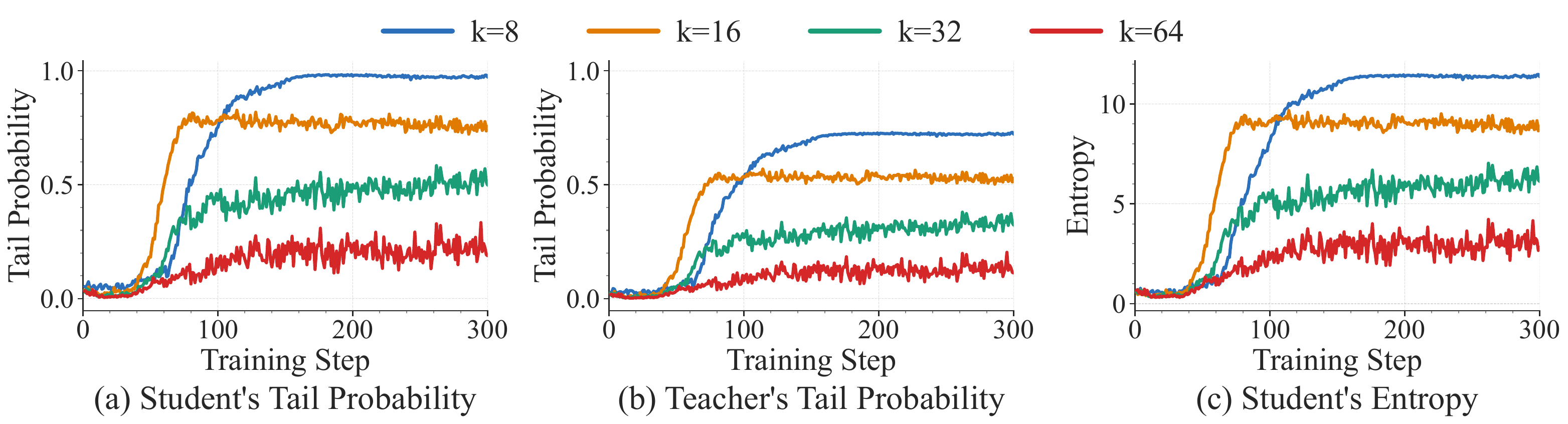}
    \caption{
\textbf{The student's tail probability, the teacher's tail probability, and the student's entropy across training steps.}
The student and teacher models are Qwen2.5-Math-1.5B and DeepSeek-R1-Distill-Qwen-7B, respectively.
The tail probability increase occurs when $k\leq 64$.
}
    \label{fig:ablate_k}
\end{figure}

\paragraph{The tail probability increase occurs when $k$ is not sufficiently large.}
We conduct an ablation study on how $k$ affects the tail probability increase by fixing
the student and teacher, and varying only $k$. Specifically, we use
Qwen2.5-Math-1.5B as the student and DeepSeek-R1-Distill-Qwen-7B as the teacher,
and sweep $k \in \{8, 16, 32, 64\}$. 
For this ablation study, we set the training max response length to 4096.

In Figure~\ref{fig:ablate_k}, we visualize the student's and teacher's tail probability on student-generated prefixes, together with the student's entropy, across different values of $k$. 
The tail probability increase emerges when $k$ is not sufficiently large ($k \leq 64$): over training steps, the student's tail probability steadily increases together with its entropy. 
The effect becomes stronger as $k$ decreases, with smaller $k$ yielding higher entropy and a larger increase in
tail probability. 
In principle, this issue vanishes as $k$ approaches the vocabulary size, since the top-$k$ objective then reduces to the full-vocabulary OPD, under which the tail probability is explicitly matched to the teacher's.
This behavior is expected from Proposition~\ref{prop:tail_prob_increase}: the magnitude of
the single-step increase is proportional to $p_t^{\mathrm{tail}}(1-p_t^{\mathrm{tail}})$,
which shrinks as $p_t^{\mathrm{tail}}\to 0$.
Since a larger $k$ yields a smaller initial tail probability, each update moves
$p_t^{\mathrm{tail}}$ less, mitigating the compounding increase over training.
Overall, the tail probability increase arises when $k$ is not sufficiently large and is mitigated as $k$ increases.

\begin{figure}
    \centering
    \includegraphics[width=1.0\linewidth]{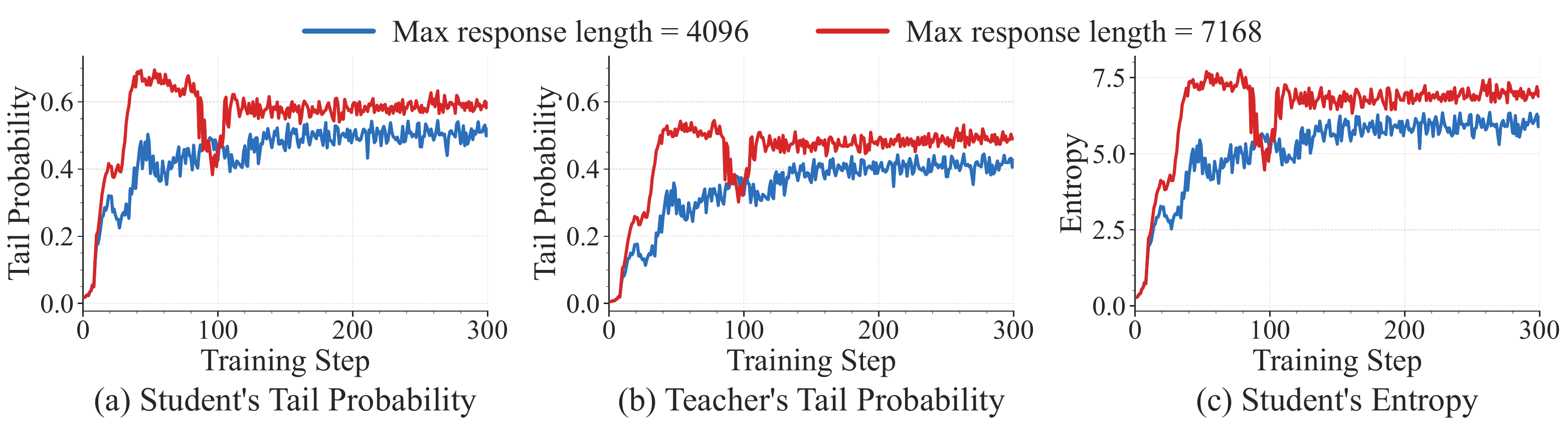}
    \caption{
\textbf{The student's tail probability, the teacher's tail probability, and the student's entropy across training steps under different maximum response lengths.}
The student and teacher models are Qwen2.5-7B-Instruct and OpenThinker3-7B, respectively.
The tail probability increase becomes more pronounced as the maximum response length grows.
}
    \label{fig:ablate_length}
\end{figure}

\paragraph{The tail probability increase becomes more pronounced with longer responses.}
We study how the response length affects the tail probability increase by fixing the student and teacher, and varying only the maximum response length.
Specifically, we use Qwen2.5-7B-Instruct as the student and OpenThinker3-7B as the teacher, and compare maximum response lengths of 4096 and 7168 tokens.

In Figure~\ref{fig:ablate_length}, we visualize the student's and teacher's tail probabilities on student-generated prefixes, together with the student's entropy, under the two length settings.
The results show that the tail probability increase is amplified under the longer response length: the student's tail probability and entropy grow faster and reach higher values throughout training.
We attribute this to error compounding along longer trajectories: as the student's tail probability rises, sampling more tokens per response increases the chance of drifting outside the teacher's top-$k$ tokens, driving the student toward uncertain prefixes where the teacher's supervision is unreliable and further inflating the tail probability.
Overall, the tail probability increase is exacerbated by longer response lengths, suggesting that the issue is particularly concerning for long-horizon tasks.

\paragraph{The tail probability increase does not occur with the student top-$k$ OPD.}
Prior work also adopts the student's top-$k$ tokens as the support~\citep{li2026rethinking}.
We examine this choice with Qwen3-1.7B-Base as the student and Qwen3-8B as the teacher,
varying only the support between $\operatorname{TopK}(q_t,k)$ and $\operatorname{TopK}(p_t,k)$.
In each case, the tail probabilities of both models are computed with respect to the
support in use, i.e.\ the teacher's top-$k$ tokens for the teacher top-$k$ variant and the
student's top-$k$ tokens for the student top-$k$ variant.

As shown in Figure~\ref{fig:ablate_support}, the tail probability increase is specific to
the teacher top-$k$ support.
Two mechanisms explain this.
First, among all token sets of size $k$, $\operatorname{TopK}(p_t,k)$ is the one that leaves
the least probability mass outside it, so the student's tail probability starts near zero.
By Proposition~\ref{prop:tail_prob_increase}, the per-step change is proportional to
$p_t^{\mathrm{tail}}(1-p_t^{\mathrm{tail}})$, and each update therefore moves
$p_t^{\mathrm{tail}}$ only slightly.
Second, the student top-$k$ support is changing given a prefix: it is recomputed from the student at every step, so a token whose probability is pushed down simply drops out of the support and is replaced by another token that the student now ranks in its top $k$.
The support therefore always consists of the student's $k$ most probable tokens, and the
probability mass it covers cannot leak away.
The teacher top-$k$ support, in contrast, is fixed by the teacher and does not follow the
student, so the mass pushed outside it stays outside and accumulates across steps.

\begin{figure}
    \centering
    \includegraphics[width=1.0\linewidth]{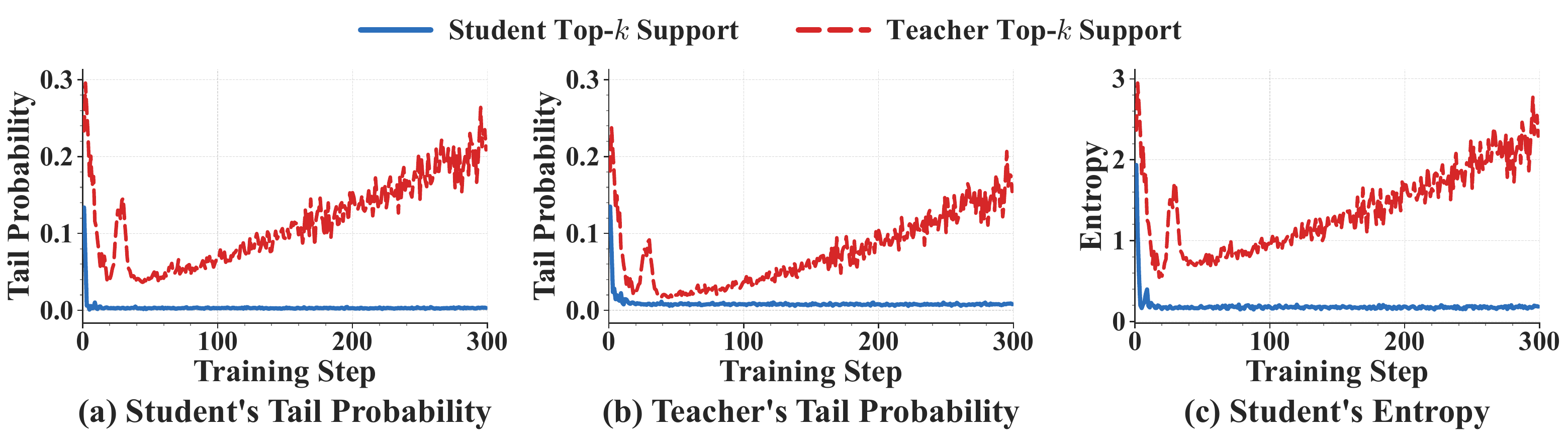}
    \caption{
\textbf{The student's tail probability, the teacher's tail probability, and the student's
entropy across training steps under the student and teacher top-$k$ supports.}
The student and teacher models are Qwen3-1.7B-Base and Qwen3-8B.
The tail probabilities are computed with respect to the support in use.
The tail probability increase occurs only under the teacher top-$k$ support.
}
    \label{fig:ablate_support}
\end{figure}

\paragraph{The teacher top-$k$ OPD yields better downstream performance than the student top-$k$ OPD.}
Although the student top-$k$ support avoids the tail probability increase, it underperforms
the teacher top-$k$ support on downstream benchmarks (Table~\ref{tab:ablate_support}).
We attribute this to two factors.
First, as shown in Figure~\ref{fig:compare_response_length}, the average rollout length under
the student top-$k$ support grows monotonically and saturates at the maximum response length
of $7168$ tokens within the first $50$ steps, while the teacher top-$k$ variant maintains a healthy response length.
Second, the teacher top-$k$ support directly covers the teacher's high-probability tokens, whereas the student's top-$k$ tokens may carry little teacher probability, allowing the objective to be reduced without moving the student toward the teacher's high-probability modes.
Given the superior downstream performance, we mainly study the teacher top-$k$ OPD in this work.

\begin{figure}
    \centering
    \includegraphics[width=0.6\linewidth]{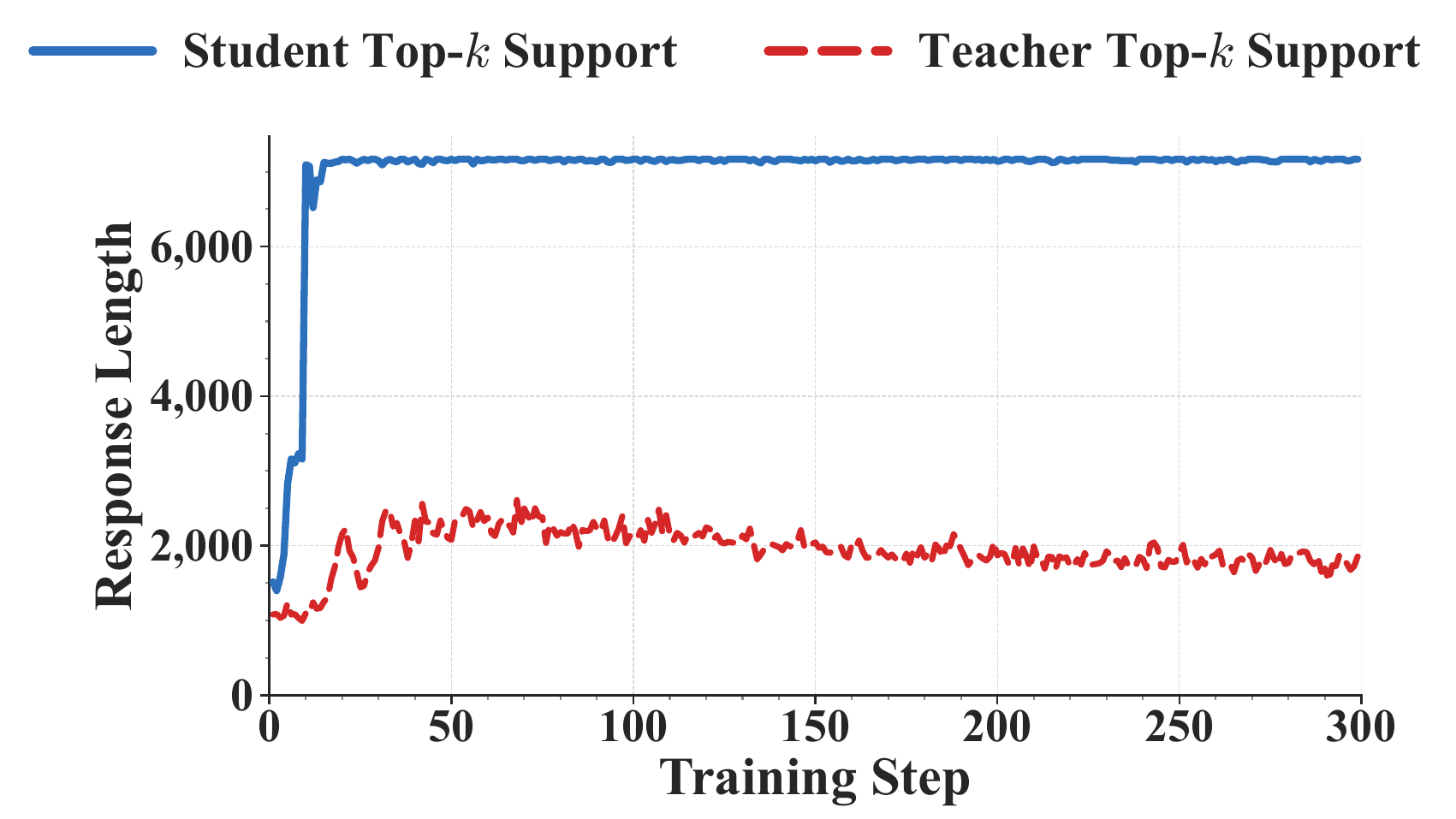}
    \caption{
\textbf{Average response length across training steps under the student and teacher top-$k$
supports.}
The student and teacher models are Qwen3-1.7B-Base and Qwen3-8B, respectively.
Under the student top-$k$ support, the response length saturates at the maximum response
length of $7168$ tokens, while the teacher top-$k$ variant maintains a healthy response length.
}
    \label{fig:compare_response_length}
\end{figure}

\begin{table}[t]
    \centering
    \caption{
    \textbf{Performance of normalized top-$k$ OPD under the student and teacher top-$k$ supports.}
    The \textbf{Avg.} column represents the macro-average across all six math reasoning benchmarks.
    Best results are shown in \textbf{bold}.
    }
    \label{tab:ablate_support}
    \setlength{\tabcolsep}{5pt}
    \renewcommand{\arraystretch}{1.15}
    \resizebox{\textwidth}{!}{
        \begin{tabular}{l|cccccc|c}
        \toprule
        \textbf{Method}
        & \textbf{MATH500}
        & \textbf{Minerva}
        & \textbf{Olympiad}
        & \textbf{AMC}
        & \textbf{AIME 24}
        & \textbf{AIME 25}
        & \textbf{Avg.} \\
        \midrule
        \multicolumn{8}{c}{
            Student: \textbf{\textit{Qwen3-1.7B-Base}}
            \quad
            Teacher: \textbf{\textit{Qwen3-8B}}
        } \\
        \midrule
        Student top-$k$
        & 68.58
        & 25.32
        & 33.35
        & 36.00
        & 9.17
        & 6.25
        & 29.78 \\

        Teacher top-$k$
        & \textbf{69.93}
        & \textbf{25.46}
        & \textbf{33.41}
        & 36.00
        & 9.17
        & \textbf{7.08}
        & \textbf{30.18} \\
        \bottomrule
        \end{tabular}
    }
\end{table}

\section{Compared Methods}
\label{sec:methods}

\paragraph{Sampled-token OPD.}
Sampled-token OPD~\citep{lu2025onpolicydistillation, xiao2026mimo} supervises only the token $\hat y_t\sim p_t$ sampled by the student:
\[
\ell_t^{\mathrm{sample}} = \log p_t(\hat y_t) - \log q_t(\hat y_t).
\]
It is an unbiased estimate of the full-vocabulary reverse KL divergence and requires only the sampled token's log-probability from the teacher, but it discards the dense information over the remaining vocabulary and suffers from high variance.

\paragraph{Unnormalized top-$k$ OPD.}
Unnormalized top-$k$ OPD restricts the divergence computation to the teacher's top-$k$ tokens $S_t^k=\operatorname{TopK}(q_t,k)$ while keeping the original probabilities:
\[
\ell_t^{\mathrm{unnorm}} = \sum_{v \in S_t^k} p_t(v)\log\frac{p_t(v)}{q_t(v)}.
\]
It provides dense supervision over $S_t^k$, but is not a well-defined divergence.

\paragraph{Normalized top-$k$ OPD.}
Normalized top-$k$ OPD~\citep{li2026rethinking, fu2026revisiting} instead renormalizes both distributions on $S_t^k$ and minimizes the resulting subset reverse KL divergence:
\[
\ell_t^{\mathrm{norm}} = D_{\mathrm{KL}}\!\bigl(\bar{p}_t^{(S_t^k)} \,\|\, \bar{q}_t^{(S_t^k)}\bigr) = \sum_{v \in S_t^k} \bar{p}_t^{(S_t^k)}(v)\log\frac{\bar{p}_t^{(S_t^k)}(v)}{\bar{q}_t^{(S_t^k)}(v)}.
\]
It aligns the student's relative shape with the teacher's over $S_t^k$, but the normalization discards the tail probability, which, as we show in Section~\ref{sec:motivation_theory}, steadily increases the student's tail probability and entropy.

\section{Implementation Details}
\label{sec:implementation_detail}
\subsection{Experimental Details} 
We implement all methods using the VERL framework~\citep{sheng2025hybridflow} and conduct experiments with 6 NVIDIA Pro 6000 GPUs, 8 NVIDIA A100 80\,GB GPUs, or 32 NVIDIA A100 40\,GB GPUs.
Unless otherwise specified, all experiments use the default settings and hyperparameters listed in Table~\ref{tab:opd_hyperparameter}.
For evaluation, we adopt the same prompt as~\citet{yan2026learning},
which is shown in Prompt~\ref{prompt:evaluation}.
For experiments using Llama-3.1-8B as the student, we adopt the tokenizer and chat template of DeepSeek-R1-Distill-Llama-8B and synchronize the corresponding vocabulary and special-token configurations.

\begin{table}[t]
\centering
\caption{\textbf{Default training and evaluation settings.}}
\label{tab:opd_hyperparameter}
\setlength{\tabcolsep}{15pt}

\begin{tabular}{llc}
\toprule
\textbf{Category} & \textbf{Item} & \textbf{Value} \\
\midrule
\multirow{11}{*}{Training}
& Training temperature & 1.0 \\
& Global batch size & 72 \\
& Mini batch size & 36 \\
& Rollout number & 4 \\
& $k$ (Number of Top-$k$ Tokens) & 16 \\
& Top-$p$ & 1.0 \\
& Max prompt length & 1024 \\
& Max response length & 7168 \\
& Learning rate & 1e-6 \\
& Training step & 300 \\
& loss aggregation & token-mean \\
& optimizer & AdamW \\
\midrule
\multirow{3}{*}{Evaluation}
& Temperature & 0.7 \\
& Top-$p$ & 0.95 \\
& Max new tokens & 8192 \\
\bottomrule
\end{tabular}
\end{table}

\begin{tcolorbox}[
    center,
    arc=0mm,
    boxrule=1pt,
    colback=blue!6!white,
    colframe=black,
    colbacktitle=black,
    attach boxed title to top left={yshift=-0.1in,xshift=0.15in},
    boxed title style={boxrule=0pt,colframe=white}
]
\label{prompt:evaluation}
Your task is to follow a systematic, thorough reasoning process before providing the final solution. This involves analyzing, summarizing, exploring, reassessing, and refining your thought process through multiple iterations. Structure your response into two sections: Thought and Solution. In the Thought section, present your reasoning using the format: "\verb|<think>\n| {thoughts} \verb|</think>\n|". Each thought should include detailed analysis, brainstorming, verification, and refinement of ideas. After "\verb|</think>\n|" in the Solution section, provide the final, logical, and accurate answer, clearly derived from the exploration in the Thought section. If applicable, include the answer in \verb|\boxed{}| for closed-form results like multiple choices or mathematical solutions. \\
\textbf{User:}  \verb|{QUESTION}| \\
\textbf{Assistant:}
\end{tcolorbox}

\subsection{Numerically Stable Computation of TA-OPD}
\label{subsec:stable_maopd}

The per-token TA-OPD loss in Eq.~\eqref{eq:lma} depends on the tail
log-probabilities $\log p_t^{\mathrm{tail}}$ and $\log q_t^{\mathrm{tail}}$.
Below, we describe a failure case of a naive implementation and how we
implement the loss.

\paragraph{Numerical issue of a naive implementation.}
The inference engine vLLM only provides access to the teacher's top-$k$
log-probabilities $\log q_t(v)$, not the probabilities themselves.
A naive implementation of TA-OPD loss is to exponentiate these log-probabilities back to
probability space, sum them to obtain the total probability of the top-$k$ tokens
$1-q_t^{\mathrm{tail}}=\sum_{v\in S_t^k}q_t(v)$, and assign the remaining tail probability
$q_t^{\mathrm{tail}}$ to $v_{\mathrm{tail}}$.
However, this is unstable.
When the teacher's tail probability is close to zero, numerical error can make the sum $\sum_{v\in S_t^k}q_t(v)$ exceed one.
The tail probability $q_t^{\mathrm{tail}}$ then becomes negative, rendering
$\log q_t^{\mathrm{tail}}$ ill-defined ($-\infty$ or NaN) and producing NaN gradients.

\paragraph{Our log-space implementation.}
To address the numerical issue, our implementation avoids forming $p_t^{\mathrm{tail}}$ and $q_t^{\mathrm{tail}}$ in probability space and carries out the whole
computation in log space.
We provide the pseudo code of our implementation in Listing~\ref{alg:maopd}.
In particular, we first obtain the log total probability of the top-$k$ tokens directly from the top-$k$ log-probabilities via a log-sum-exp,
\[
\log\bigl(1-p_t^{\mathrm{tail}}\bigr)=\operatorname{logsumexp}_{v\in S_t^k}\log p_t(v),
\qquad
\log\bigl(1-q_t^{\mathrm{tail}}\bigr)=\operatorname{logsumexp}_{v\in S_t^k}\log q_t(v),
\]
which is numerically stable and never overflows.
To further rule out the boundary case $p_t^{\mathrm{tail}} \to 0$, we clamp the log total probability such that $\log(1-p_t^{\mathrm{tail}}) \le -\epsilon$, ensuring a strictly positive tail probability and a finite log-domain computation.
The tail log-probability term $\log p_t^{\mathrm{tail}}$ is then computed using the
\texttt{log1mexp} primitive, which directly evaluates $\log(1-\exp(a))$ in log space:
\[
\log p_t^{\mathrm{tail}}
= \operatorname{log1mexp}\!\bigl(\log(1-p_t^{\mathrm{tail}})\bigr).
\]
The teacher term $\log q_t^{\mathrm{tail}}$ is computed analogously. 
In this way, we avoid the numerical issues of the naive implementation.

\begin{lstlisting}[language=Python, float=tbp, caption={Pseudocode for the TA-OPD loss implementation.}, label={alg:maopd}]
def compute_taopd_loss(student_topk_log_probs, teacher_topk_log_probs, eps):
    # Input:
    #   student_topk_log_probs: student log-probs on the teacher's top-k tokens
    #   teacher_topk_log_probs: teacher top-k log-probs, log q(v)
    #   eps:                    small constant to keep the tail prob positive
    # Return:
    #   loss: per-token TA-OPD reverse KL divergence on the augmented token set

    # 1. log total probability of the top-k tokens via logsumexp
    student_log_topk_prob = logsumexp(student_topk_log_probs, dim=-1)   # log(1 - p_tail)
    teacher_log_topk_prob = logsumexp(teacher_topk_log_probs, dim=-1)   # log(1 - q_tail)

    # 2. tail-token log-prob log(p_tail) via log1mexp
    student_tail_log_prob = log1mexp(clamp_max(student_log_topk_prob, -eps))
    teacher_tail_log_prob = log1mexp(clamp_max(teacher_log_topk_prob, -eps))

    # 3. append the tail token and compute reverse KL on the augmented token set
    student_log_p = cat([student_topk_log_probs, student_tail_log_prob], dim=-1)
    teacher_log_q = cat([teacher_topk_log_probs, teacher_tail_log_prob], dim=-1)
    taopd_loss = sum(exp(student_log_p) * (student_log_p - teacher_log_q), dim=-1)
    return taopd_loss
\end{lstlisting}

\section{Extensive Study}

\subsection{Synthetic experiment}
\label{subsec:synthetic}
To provide an intuitive understanding of different OPD objectives, we construct a synthetic experiment based on a 30-armed bandit.  
Both the student and teacher policies are parameterized by vectors in \(\mathbb{R}^{1\times 30}\), and the corresponding probability distributions are obtained by applying the softmax function to these vectors.  
The teacher policy is kept fixed during training and is defined over the discrete class space \(\mathcal{V} = \{1,\dots,30\}\) as the following bimodal distribution:
\[
q(v) \propto \exp\!\left(-\frac{(v-10)^2}{2\cdot 2^2}\right) \;+\; 0.88 \cdot \exp\!\left(-\frac{(v-20)^2}{2\cdot 2^2}\right), \quad v \in \mathcal{V}.
\]
The student policy is parameterized as a categorical distribution over \(\mathcal{V}\):
\[
p_\theta(v) = \mathrm{softmax}(z_\theta)_v, \quad z_\theta \in \mathbb{R}^{30},
\]
where \(z_\theta\) is initialized from \(\mathcal{N}(0, 0.01^2)\).
The student is optimized for 20000 steps using AdamW with learning rate $1\times10^{-3}$. We compare four objectives: full-vocabulary OPD, normalized top-$k$ OPD, TA-OPD, and sample-corrected TA-OPD.
For top-$k$ OPD, $k$ is set to $8$.

\definecolor{plotpurple}{RGB}{128,0,128}

\begin{figure}
    \centering
    \includegraphics[width=1.0\linewidth]{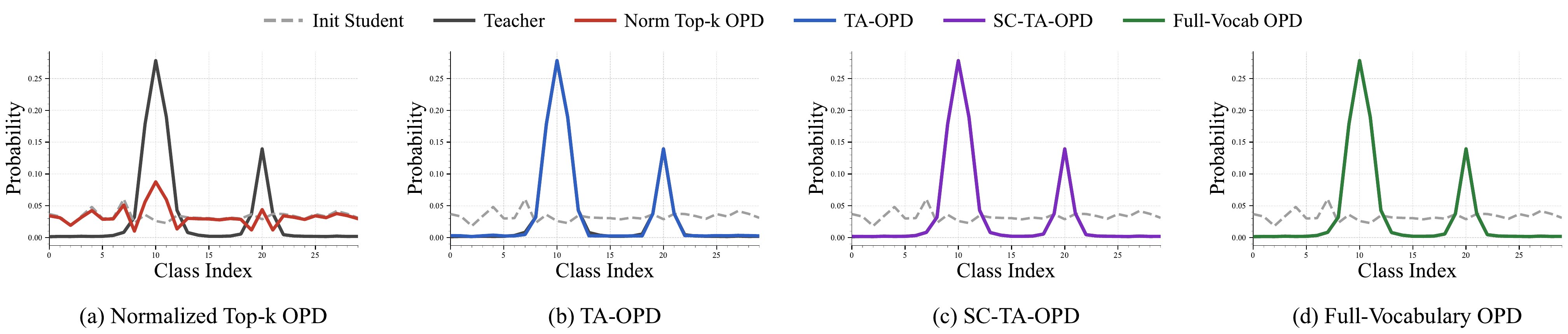}
    \caption{
Visualization of the student policies under different OPD objectives.
Normalized top-$k$ OPD (\textcolor{red}{Red}) only matches the relative shape of the teacher distribution over the top-$k$ tokens, but fails to align the tail probability.
In contrast, TA-OPD (\textcolor{blue}{Blue}) and sample-corrected TA-OPD (\textcolor{plotpurple}{Purple}) closely recover the bimodal teacher distribution.
}

    \label{fig:policy_comparison}
\end{figure}

\paragraph{TA-OPD matches both the shape and tail probability of the teacher policy.}
Figure~\ref{fig:policy_comparison} compares the final student policies trained with different OPD objectives.
TA-OPD closely recovers the bimodal teacher distribution.
While normalized top-$k$ OPD matches the relative shape over the teacher's top-$k$ tokens, it assigns a substantially larger tail probability than the teacher.
Overall, TA-OPD preserves the missing tail probability information and therefore provides a much closer approximation to the teacher policy than normalized top-$k$ OPD.

\begin{figure}
    \centering
    \includegraphics[width=1.0\linewidth]{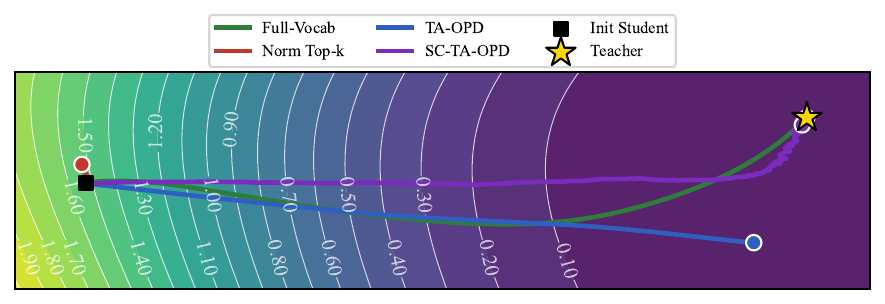}
   \caption{
\textbf{Learning dynamics of different OPD objectives.}
TA-OPD better approximates the full-vocabulary OPD.
}

    \label{fig:opd_landscape}
\end{figure}

\paragraph{TA-OPD better approximates the full-vocabulary OPD.}
Figure~\ref{fig:opd_landscape} further visualizes the optimization trajectories on the full-vocabulary reverse-KL landscape.
In particular, we collect the logit trajectories of all methods together with the teacher logits, center them in logit space, and project them onto the first two principal components.
We then evaluate the full-vocabulary reverse KL on this two-dimensional plane and overlay the trajectories of different objectives.
The results show that full-vocabulary OPD and TA-OPD move toward the same low-loss region around the teacher policy, while normalized top-$k$ OPD converges to a point that remains far from the teacher under the full-vocabulary KL. 
Overall, TA-OPD's objective provides a better top-$k$ estimate of the full-vocabulary reverse KL than the normalized objective.

\subsection{Top-$k$ Overlap Ratio of Different OPD Methods}
\label{subsec:overlap}

\citet{li2026rethinking} find that the top-$k$ overlap ratio between the student and the teacher predicts the success of OPD well.
Here we compare the two objectives under this metric.

\paragraph{Setup.}
On each student-generated prefix, the top-$k$ overlap ratio is the fraction of the
teacher's top-$k$ tokens that also fall in the student's top-$k$ tokens,
\[
\mathrm{Overlap}_t^k
= \frac{\bigl|\operatorname{TopK}(p_t,k)\cap\operatorname{TopK}(q_t,k)\bigr|}{k},
\]
which we average over all tokens in a training batch.
A higher ratio means the student and the teacher agree on which tokens are plausible,
so the top-$k$ objective supervises a token set that is meaningful to the student.
We use the three student--teacher pairs of Section~\ref{subsec:empirical_anlysis} and
report the ratio across training steps.

\begin{figure}
    \centering
    \includegraphics[width=1.0\linewidth]{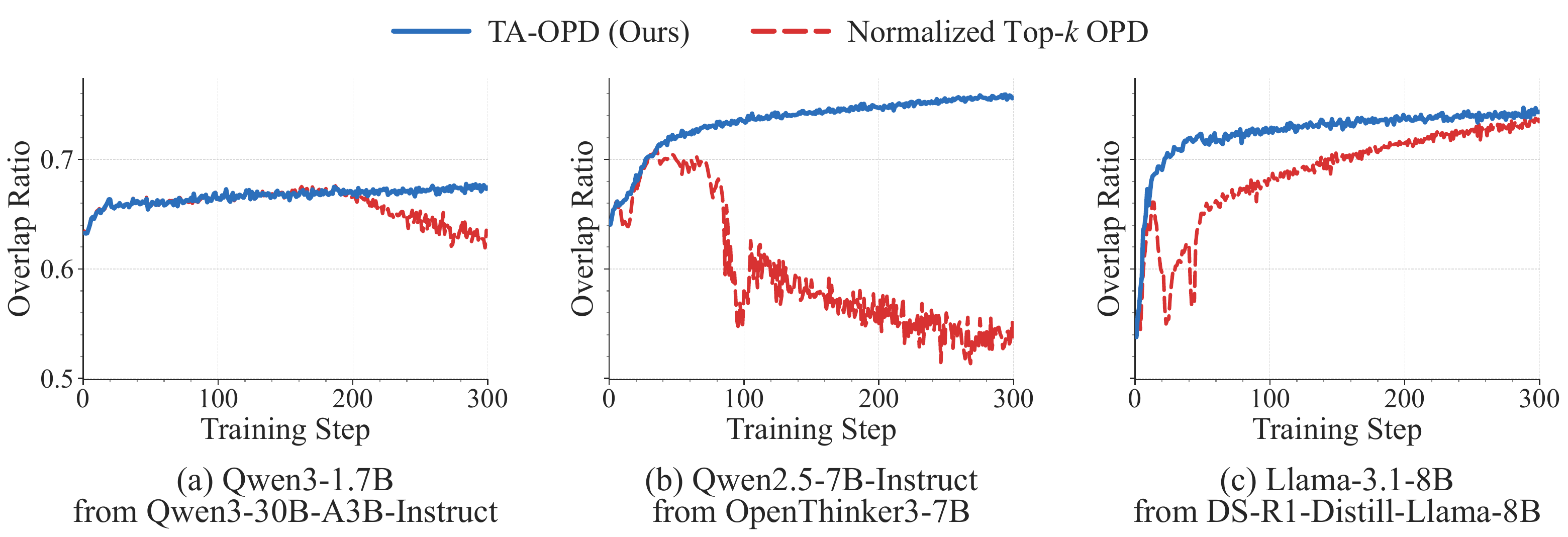}
    \caption{
\textbf{Top-$k$ overlap ratio between the student and the teacher across training steps.}
TA-OPD achieves higher overlap ratios than normalized top-$k$ OPD across different model pairs.
}
    \label{fig:overlap_ratio}
\end{figure}

\paragraph{TA-OPD attains a higher top-$k$ overlap ratio.}
Figure~\ref{fig:overlap_ratio} compares the overlap ratio of the two objectives.
TA-OPD increases the overlap ratio monotonically and keeps it stable throughout training.
In contrast, normalized top-$k$ OPD is consistently lower: for Qwen2.5-7B-Instruct the
ratio peaks early and then collapses, and for Llama-3.1-8B it recovers only slowly and
remains below TA-OPD for the entire run.
Overall, TA-OPD keeps the student and the teacher aligned on the top-$k$ tokens, which is consistent with its stronger downstream accuracy.

\subsection{Computational Overhead of TA-OPD}
\label{subsec:overhead}

TA-OPD introduces no additional computational overhead compared with normalized top-$k$ OPD.
Both methods require exactly the same teacher-side information: the log-probabilities of the teacher's top-$k$ tokens.
Given these quantities, TA-OPD only additionally computes the tail log-probabilities $\log p_t^{\mathrm{tail}}$ and $\log q_t^{\mathrm{tail}}$, which involve a single log-sum-exp over $k$ values per token.
This cost is negligible relative to the forward and backward passes of the student model.
Sample-corrected TA-OPD additionally requires the teacher's log-probability of the sampled token $\hat y_t$.
Although this appears to be an extra query, it incurs negligible additional cost in practice: when queried for the top-$k$ log-probabilities, inference engines such as vLLM~\citep{kwon2023efficient} always compute and return the sampled token's log-probability as well, even if it falls outside the top-$k$ tokens.
Therefore, the teacher-query cost of sample-corrected TA-OPD is identical to that of TA-OPD.

We empirically verify this by measuring the wall-clock cost of the sampled-token OPD, normalized top-$k$ OPD, TA-OPD, and sample-corrected TA-OPD under the same setup.
We use Qwen3-1.7B as the student and Qwen3-30B-A3B-Instruct-2507 as the teacher, with all other settings following Table~\ref{tab:opd_hyperparameter}.
As shown in Figure~\ref{fig:computational_overhead} (a), completing the full $300$ training steps takes $23$h$2$m$18$s for TA-OPD and $22$h$47$m$56$s for normalized top-$k$ OPD, i.e., a difference of $14$m$22$s, or $1.05\%$ of the total runtime.
This 14-minute gap might stem from the slightly longer average response length under TA-OPD.
Figure~\ref{fig:computational_overhead} (b) further compares the per-step throughput distributions, whose medians are around $1000$ and $1005$ tokens/s for TA-OPD and normalized top-$k$ OPD, respectively.
Overall, TA-OPD incurs negligible computational cost compared to normalized top-$k$ OPD.

\begin{figure}[t]
    \centering
    \begin{subfigure}[t]{0.48\linewidth}
        \centering
        \includegraphics[width=\linewidth]{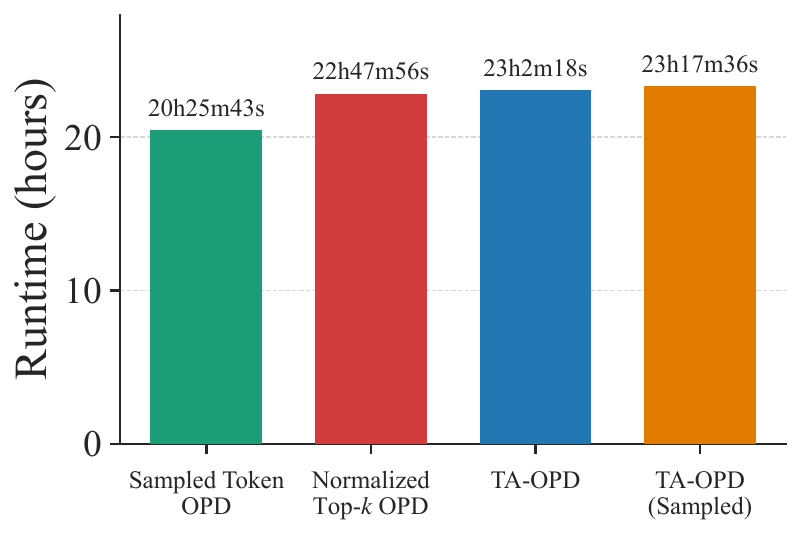}
        \caption{Total training runtime.}
        \label{fig:compare_time}
    \end{subfigure}
    \hfill
    \begin{subfigure}[t]{0.48\linewidth}
        \centering
        \includegraphics[width=\linewidth]{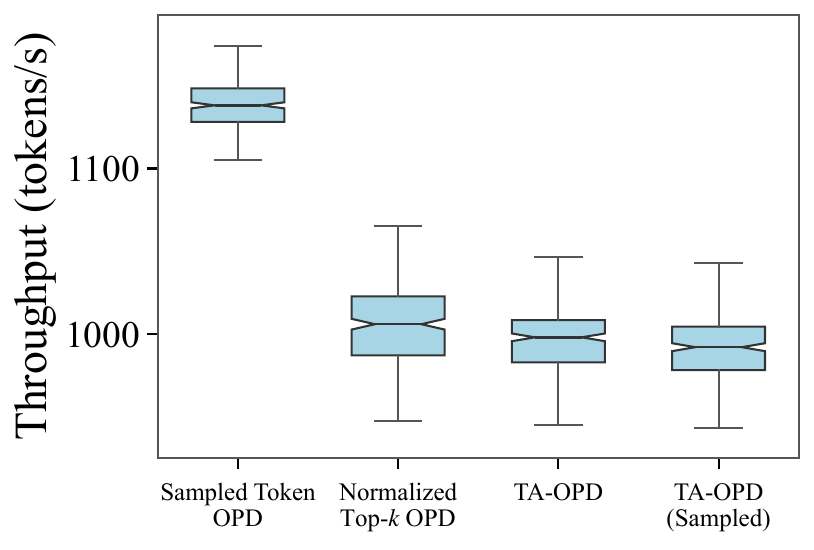}
        \caption{Throughput distribution during training.}
        \label{fig:throughput_distribution}
    \end{subfigure}

    \caption{Computational overhead comparison between sampled-token OPD, normalized Top-$k$ OPD, TA-OPD, and sample-corrected TA-OPD. 
    Left: total training runtime. Right: throughput distribution across training steps.
    The student and teacher models are Qwen3-1.7B and Qwen3-30B-A3B-Instruct-2507, respectively.
    }
    \label{fig:computational_overhead}
\end{figure}

\subsection{Ablation on the Tail Token}
\label{subsec:ablate_tail}

Our analysis attributes the tail probability increase issue to top-$k$ normalization.
A natural question is therefore whether simply removing the normalization is
already sufficient, making the tail token in TA-OPD unnecessary.
We show that it is not: without the tail token $v_{\mathrm{tail}}$, training will diverge.

\paragraph{Setup.}
We ablate the tail token by comparing TA-OPD with unnormalized top-$k$ OPD, which removes the tail term in TA-OPD's loss function. 
Specifically, its loss function is given by:
\[
\ell_t^{\mathrm{unnorm}} = \sum_{v \in S_t^k} p_t(v)\log\frac{p_t(v)}{q_t(v)}
= \ell_t^{\mathrm{TA}} - p_t^{\mathrm{tail}}\log\frac{p_t^{\mathrm{tail}}}{q_t^{\mathrm{tail}}}.
\]

The student and teacher are Qwen2.5-7B-Instruct and OpenThinker3-7B, with all
other settings following Table~\ref{tab:opd_hyperparameter}.

\begin{figure}
    \centering
    \includegraphics[width=1.0\linewidth]{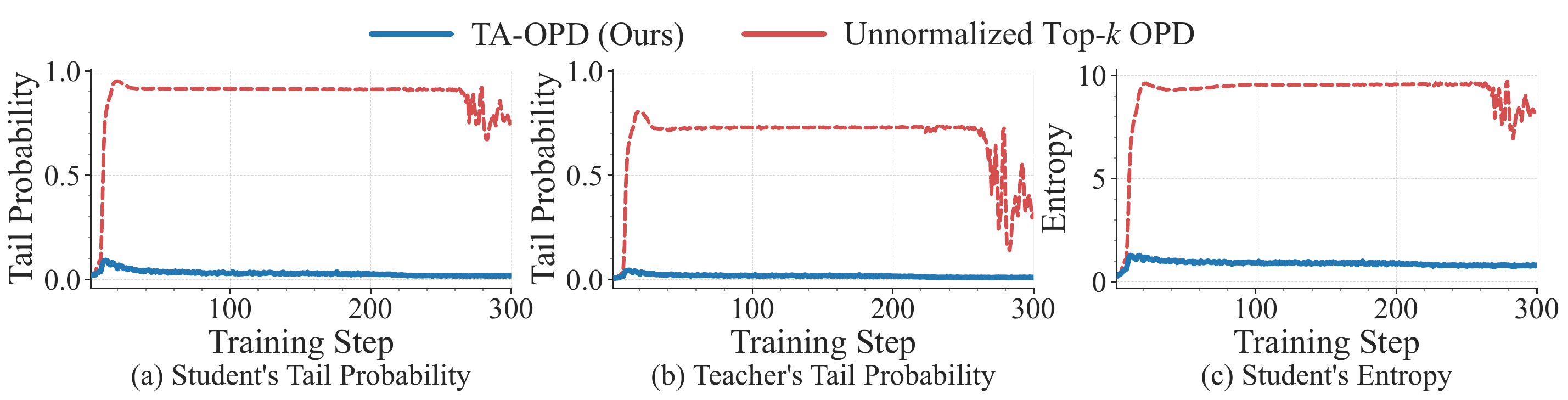}
    \caption{
\textbf{The student's tail probability, the teacher's tail probability, and the student's
token-level entropy across training steps.}
Without the tail token, the student's tail probability rises above $0.9$ and training collapses.
}
    \label{fig:ablate_tail_token}
\end{figure}

\begin{table}[t]
    \centering
    \caption{
    \textbf{Avg@8 of unnormalized top-$k$ OPD and TA-OPD.}
    The \textbf{Avg.} column represents the macro-average across all six benchmarks.
    Best results are shown in \textbf{bold}.
    }
    \label{tab:compare_unnorm}

    \setlength{\tabcolsep}{5pt}
    \renewcommand{\arraystretch}{1.15}
    \resizebox{\textwidth}{!}{
        \begin{tabular}{l|cccccc|c}
        \toprule
        \textbf{Methods}
        & \textbf{MATH500}
        & \textbf{Minerva}
        & \textbf{Olympiad}
        & \textbf{AMC}
        & \textbf{AIME 24}
        & \textbf{AIME 25}
        & \textbf{Avg.} \\
        \midrule

        \multicolumn{8}{c}{
            Student: \textbf{\textit{Qwen2.5-7B-Instruct}}
            \quad
            Teacher: \textbf{\textit{OpenThinker3-7B}}
        } \\
        \midrule

        UnNorm.\ top-$k$
        & 3.35
        & 1.70
        & 1.42
        & 3.00
        & 0.42
        & 0.42
        & 1.72 \\

     \rowcolor{blue!10}
    \textbf{TA-OPD (Ours)}
    & \textbf{77.88}
    & \textbf{32.58}
    & \textbf{42.41}
    & \textbf{45.03}
    & \textbf{16.67}
    & \textbf{17.50}
    & \textbf{38.68} \\
    \bottomrule
        \end{tabular}
    }
\end{table}

\paragraph{Removing the tail token collapses training.}
Figure~\ref{fig:ablate_tail_token} compares the training dynamics of the two objectives.
Under unnormalized top-$k$ OPD, the student's tail probability rises above $0.9$ within
50 steps and its entropy grows monotonically, while the teacher's tail probability
increases in tandem.
The collapse is reflected in downstream accuracy (Table~\ref{tab:compare_unnorm}):
its Avg@8 on MATH500 is only 3.35\%, whereas TA-OPD reaches 77.88\%.

\paragraph{The unnormalized objective is not a divergence.}
The failure is not an optimization artifact but a property of the objective itself:
the objective is not minimized at $p_t=q_t$.
Its minimizer admits the closed form
\[
p_t^\star(v) = q_t(v)/e \quad \forall v\in S_t^k,
\qquad\text{hence}\qquad
p_t^{\mathrm{tail},\star} = 1-\frac{1-q_t^{\mathrm{tail}}}{e} \;\ge\; 1-e^{-1}\approx 0.63 .
\]
That is, the objective is minimized by deflating every top-$k$ probability by a
factor of $e$ and assigning the removed mass to the tail, which drives the loss negative.
The optimum therefore assigns at least $63\%$ of the probability mass outside the
teacher's top-$k$ tokens, regardless of the teacher distribution.
Adding the tail token contributes exactly the missing term
$p_t^{\mathrm{tail}}\log(p_t^{\mathrm{tail}}/q_t^{\mathrm{tail}})$, which turns the
objective into a reverse KL divergence on $S_t^{+}$: it is non-negative and uniquely
minimized when the student matches the teacher on both the top-$k$ tokens and the tail.

Overall, introducing the tail token is necessary for TA-OPD.

\subsection{Discussion of EMA-PG}
\label{subsec:ema_pg}

Exponential Moving Average-Policy Gradient (EMA-PG)~\citep{zhang2026ema} proposes a top-$k$ KL estimator that is related but distinct from our sample-corrected TA-OPD.
Its per-token loss is:
\begin{equation}
\ell_t^{\mathrm{EMA}}
=
\underbrace{
\sum_{v\in S_t^{k}}
p_t(v)\operatorname{sg}\!\left(\log\frac{p_t(v)}{q_t(v)}\right)
}_{\textstyle =\;\ell_t^{\mathrm{unnorm}}\ \text{\normalsize in value}}
+
\mathbf{1}[\hat y_t\notin S_t^k]\,
\frac{p_t(\hat y_t)}{\operatorname{sg}(p_t(\hat y_t))}
\operatorname{sg}\!\left(\log\frac{p_t(\hat y_t)}{q_t(\hat y_t)}\right).
\label{eq:ema}
\end{equation}
In contrast, sample-corrected TA-OPD's per-token loss is given by:
\begin{equation*}
\ell_t^{\mathrm{SC\text{-}TA}}
=
\underbrace{
\sum_{v\in S_t^{+}}
p_t(v)\operatorname{sg}\!\left(\log\frac{p_t(v)}{q_t(v)}\right)
}_{\textstyle =\;\ell_t^{\mathrm{TA}}\ \text{\normalsize in value}}
+
\mathbf{1}[\hat y_t\notin S_t^k]\,
\frac{p_t(\hat y_t)}{\operatorname{sg}(p_t(\hat y_t))}
\operatorname{sg}\!\left(\log\frac{p_t(\hat y_t)/p_t^{\mathrm{tail}}}{q_t(\hat y_t)/q_t^{\mathrm{tail}}}\right).
\end{equation*}
Both estimators are unbiased in value and in gradient to full-vocabulary reverse KL divergence, but they differ in where the truncation is placed and in what the sampled token is used to estimate.
EMA-PG estimates the entire tail contribution $\sum_{v\notin S_t^k}p_t(v)\log\frac{p_t(v)}{q_t(v)}$ with the sampled token.
Instead, our estimator starts from $\ell_t^{\mathrm{TA}}$, which already accounts for the
tail probability through the tail token, and uses the sampled token only to estimate the
residual $p_t^{\mathrm{tail}}\DKL(\tilde p_t\|\tilde q_t)$ identified in
Proposition~\ref{prop:bound}.
In other words, our estimator is obtained by debiasing TA-OPD, whereas EMA-PG debiases the unnormalized top-$k$ objective.

\paragraph{Behavior when the sampled token falls inside the top-$k$ tokens.}
When $\hat y_t\in S_t^k$, which occurs with probability $1-p_t^{\mathrm{tail}}$ and thus
covers most tokens, the correction term vanishes for both losses.
In this case, $\ell_t^{\mathrm{EMA}}$ reduces to the unnormalized top-$k$ loss
$\ell_t^{\mathrm{unnorm}}$, while $\ell_t^{\mathrm{SC\text{-}TA}}$ reduces to
$\ell_t^{\mathrm{TA}}$.
As shown in Appendix~\ref{subsec:ablate_tail}, $\ell_t^{\mathrm{TA}}$ is superior to $\ell_t^{\mathrm{unnorm}}$.
Consequently, although EMA-PG is unbiased in expectation, on most tokens its realized
objective is the unnormalized top-$k$ loss function that we have shown to collapse training, and it relies on infrequent
sampled-token corrections to compensate; our estimator instead falls back to a well-defined divergence on $S_t^{+}$ that is uniquely minimized at $p_t=q_t$.

\paragraph{Variance analysis.}
Since both estimators are unbiased, a natural criterion for comparing them is the variance
of the estimate.
A direct computation gives
\begin{equation}
\operatorname{Var}\bigl(\ell_t^{\mathrm{EMA}}\bigr)-\operatorname{Var}\bigl(\ell_t^{\mathrm{SC\text{-}TA}}\bigr)
=p_t^{\mathrm{tail}}\bigl(1-p_t^{\mathrm{tail}}\bigr)\,
\log\frac{p_t^{\mathrm{tail}}}{q_t^{\mathrm{tail}}}
\left(\log\frac{p_t^{\mathrm{tail}}}{q_t^{\mathrm{tail}}}
+2\,\DKL(\tilde p_t\|\tilde q_t)\right).
\label{eq:var_gap}
\end{equation}
Since $\DKL(\tilde p_t\|\tilde q_t)\ge 0$, the gap is strictly positive whenever
$p_t^{\mathrm{tail}}>q_t^{\mathrm{tail}}$, i.e.\ whenever the student's tail probability
exceeds the teacher's, which is likely to happen when using the teacher's top-$k$ support.
Since the support $S_t^k$ consists of the teacher's top-$k$ tokens, the teacher's tail probability is typically smaller than the student's.
Figures~\ref{fig:mass_entropy_comparison} confirms this
empirically, where $q_t^{\mathrm{tail}}$ is below $p_t^{\mathrm{tail}}$ throughout
training across all model pairs.
Overall, our loss objective tends to have lower variance than EMA-PG.

\section{Detailed Results}

\subsection{Training Dynamics of Additional Model Pairs}
Figure~\ref{fig:mass_entropy_comparison_multi} presents additional comparisons between TA-OPD and normalized top-$k$ OPD in terms of tail probability and token-level entropy during training.
Results are shown on three student--teacher pairs: Qwen3-1.7B paired with Qwen3-30B-A3B-Instruct-2507, Llama-3.1-8B paired with DeepSeek-R1-Distill-Llama-8B, Qwen3-1.7B-Base paired with Qwen3-8B.

\begin{figure}
    \centering


     \begin{subfigure}[t]{1.0\linewidth}
        \centering
        \includegraphics[width=\linewidth]{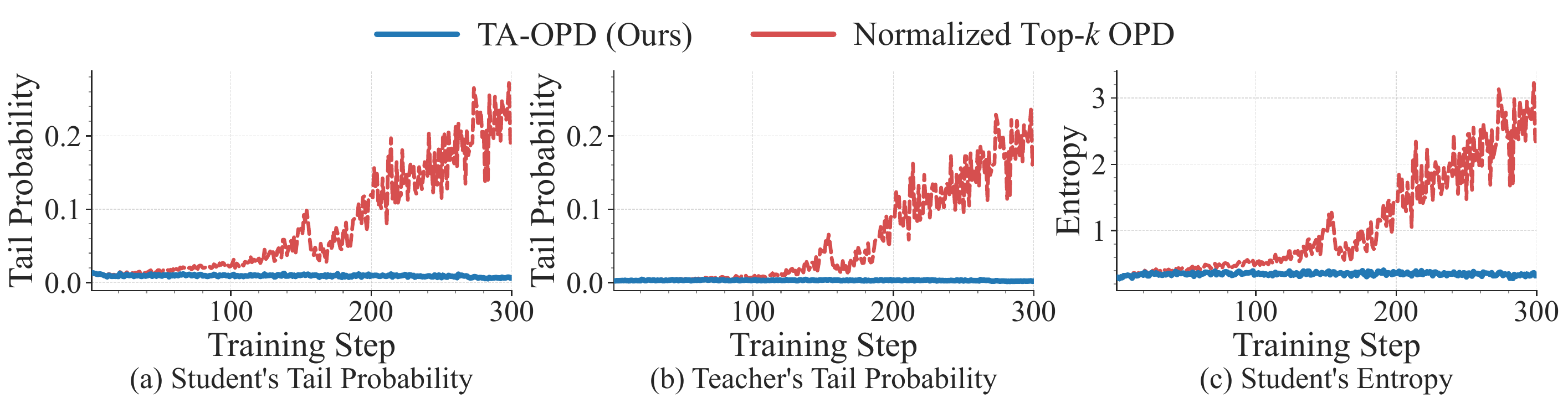}
        \caption*{\textbf{Qwen3-1.7B (Student)} $\rightarrow$ \textbf{Qwen3-30B-A3B-Instruct-2507 (Teacher)}}
    \end{subfigure}
    
    \begin{subfigure}[t]{1.0\linewidth}
        \centering
        \includegraphics[width=\linewidth]{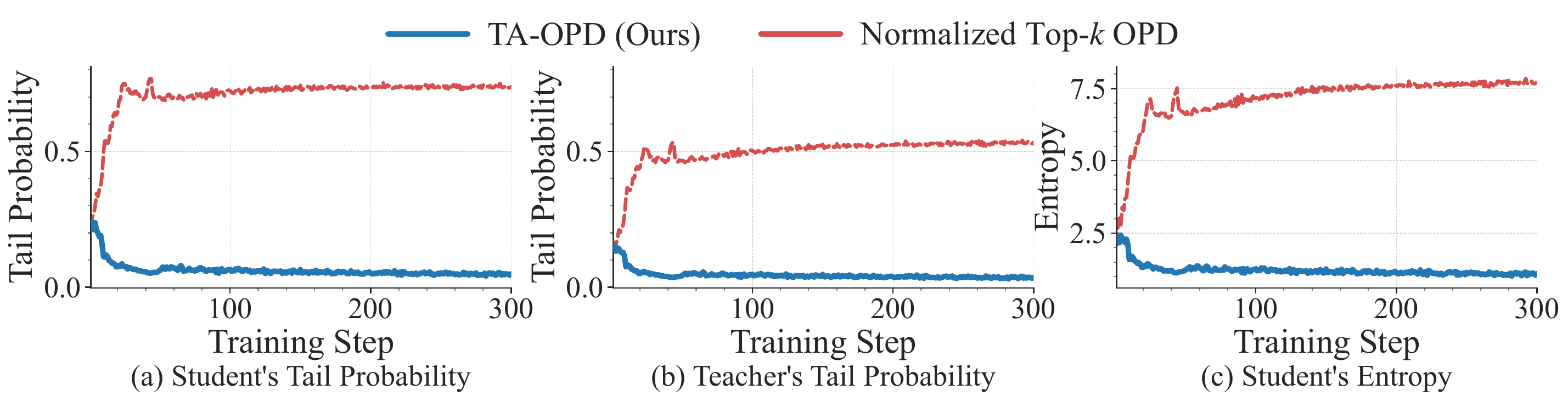}
        \caption*{\textbf{Llama-3.1-8B (Student)} $\rightarrow$ \textbf{DeepSeek-R1-Distill-Llama-8B (Teacher)}}
    \end{subfigure}

    \begin{subfigure}[t]{1.0\linewidth}
        \centering
        \includegraphics[width=\linewidth]{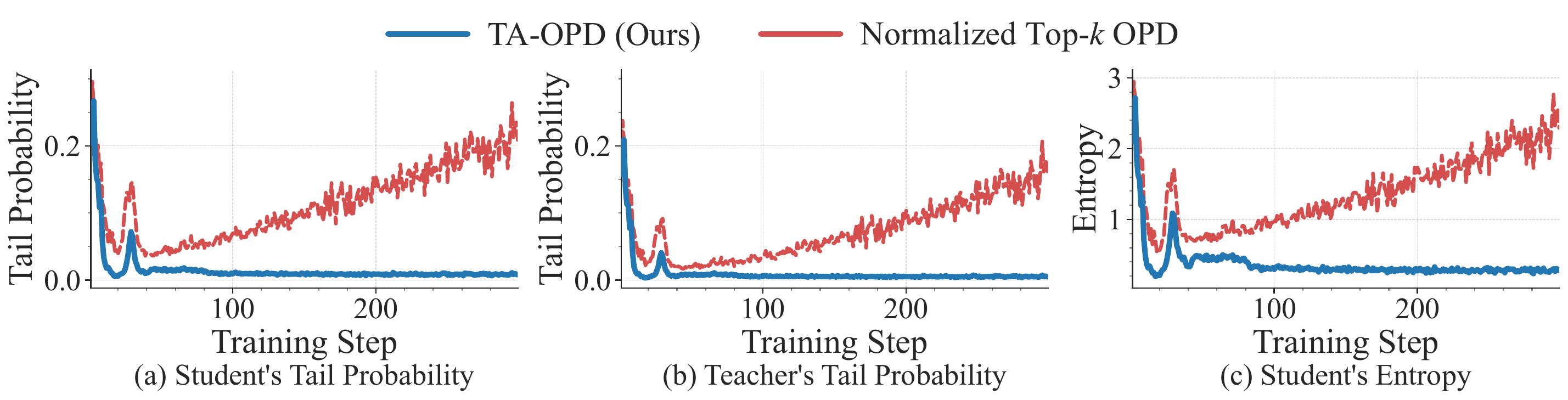}
        \caption*{\textbf{Qwen3-1.7B-Base (Student)} $\rightarrow$ \textbf{Qwen3-8B (Teacher)}}
    \end{subfigure}

    \caption{
\textbf{Tail probability and token-level entropy over training for TA-OPD and Normalized top-$k$ OPD.}
Each row corresponds to a different student-teacher pair.
}
    \label{fig:mass_entropy_comparison_multi}
\end{figure}

\begin{figure}
    \centering
        \begin{subfigure}[t]{1.0\linewidth}
        \centering
        \includegraphics[width=\linewidth]{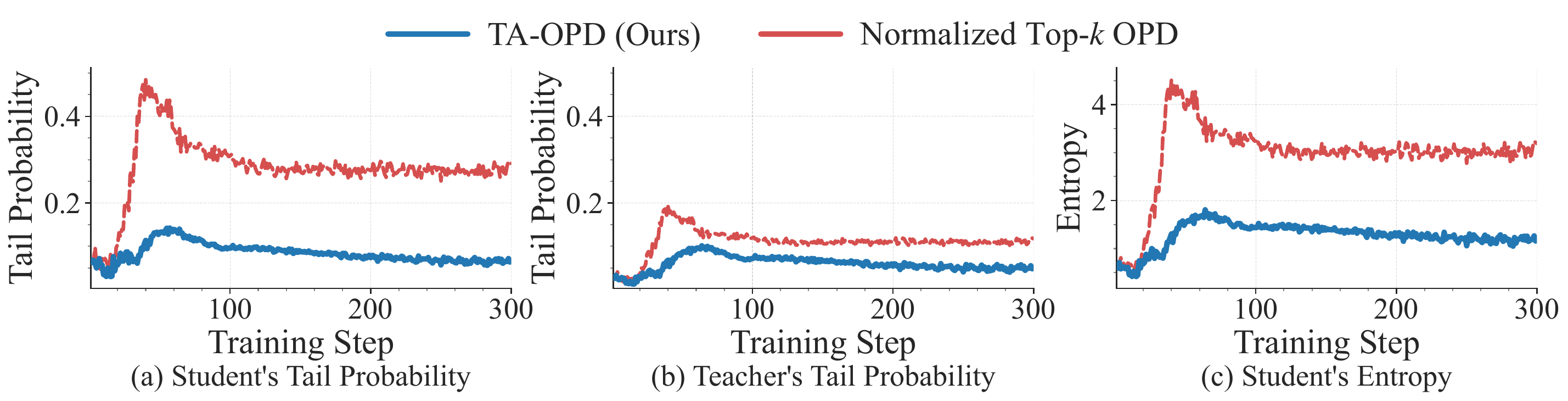}
        \caption*{\textbf{Qwen2.5-Math-1.5B (Student)} $\rightarrow$ \textbf{JustRL-DeepSeek-1.5B (Teacher)}}
    \end{subfigure}
    
     \begin{subfigure}[t]{1.0\linewidth}
        \centering
        \includegraphics[width=\linewidth]{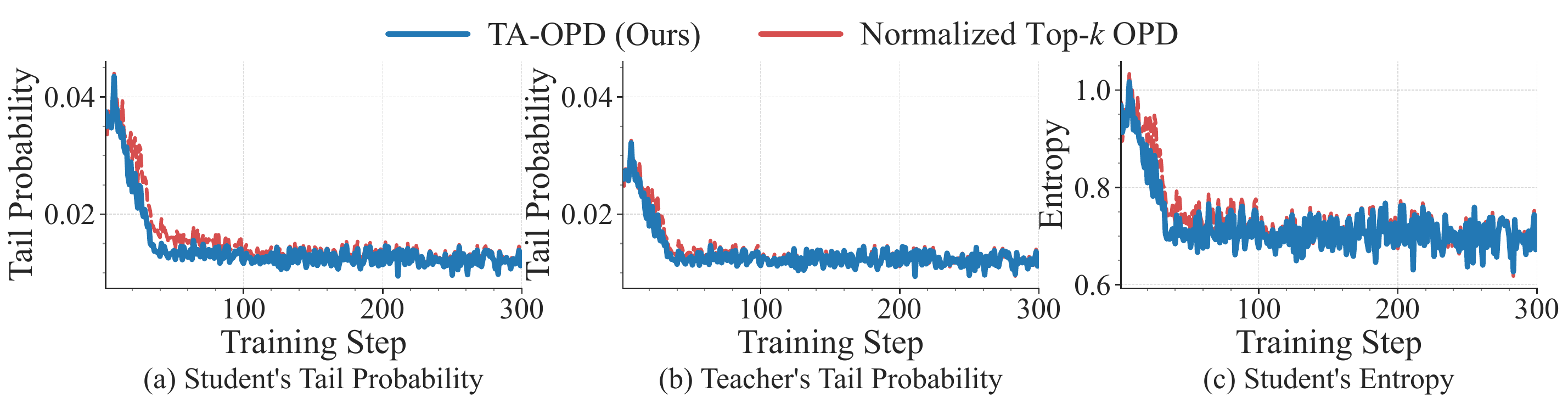}
        \caption*{\textbf{DeepSeek-R1-Distill-Qwen-1.5B (Student)} $\rightarrow$ \textbf{JustRL-DeepSeek-1.5B (Teacher)}}
    \end{subfigure}
    \caption{\textbf{Tail probability and token-level entropy over training for TA-OPD and Normalized top-$k$ OPD under different student-teacher capability gaps.}
The tail probability increase only occurs for Qwen2.5-Math-1.5B but not for DeepSeek-R1-Distill-Qwen-1.5B.}
    \label{fig:capability_gap_tail_prob}
\end{figure}

\end{document}